\documentclass[twoside,11pt]{article}
\usepackage{jair, theapa, rawfonts}

\jairheading{65}{2019}{569-631}{10/2017}{08/2019}
\ShortHeadings{A Survey Of Cross-lingual Word Embedding Models}
{Ruder, Vuli\'c, \& S{{\o}}gaard}
\firstpageno{569}

\usepackage[utf8]{inputenc} 
\usepackage[T1]{fontenc}    
\usepackage{url}            
\usepackage{booktabs}       
\usepackage{amsfonts}       
\usepackage{nicefrac}       
\usepackage{microtype}      
\usepackage{amsmath,amsthm}
\usepackage{graphicx}
\usepackage{subcaption}
\usepackage{url}
\usepackage{multirow, makecell}
\usepackage{todonotes}

\usepackage{listings}

\usepackage{xcolor}
\definecolor{nice-red}{HTML}{E41A1C}
\definecolor{nice-green}{HTML}{4DAF4A}
\definecolor{nice-blue}{HTML}{377EB8}

\newtheorem{lemma}{Lemma}

\begin{document}

\title{A Survey of Cross-lingual Word Embedding Models}

\author{\name Sebastian Ruder \email sebastian@ruder.io \\
       \addr Insight Research Centre, National University of Ireland,\\Galway, Ireland\\
       Aylien Ltd., Dublin, Ireland
       \AND
       \name Ivan Vulić \email iv250@cam.ac.uk \\
       \addr Language Technology Lab, University of Cambridge, UK
       \AND
       \name Anders Søgaard \email soegaard@di.ku.dk \\
       \addr University of Copenhagen, Copenhagen, Denmark}


\maketitle


\begin{abstract}
Cross-lingual representations of words enable us to reason about word meaning in multilingual contexts and are a key facilitator of cross-lingual transfer when developing natural language processing models for low-resource languages.
In this survey, we provide a comprehensive typology of cross-lingual word embedding models.
We compare their data requirements and objective functions. The recurring theme of the survey is that many of the models presented in the literature optimize for the same objectives, and that seemingly different models are often equivalent, {\em modulo}~optimization strategies, hyper-parameters, and such.
We also discuss the different ways cross-lingual word embeddings are evaluated, as well as future challenges and research horizons. 

\end{abstract}

\section{Introduction}
In recent years, (monolingual) vector representations of words, so-called \textit{word embeddings} \cite{Mikolov2013a,Pennington2014} have proven extremely useful across a wide range of natural language processing (NLP) applications. In parallel, the public awareness of the digital language divide\footnote{E.g.,~\url{http://labs.theguardian.com/digital-language-divide/}}, as well as the availability of multilingual benchmarks \cite<Nivre et al., 2016a;>{hovy2006ontonotes,sylak2015language}
, has made cross-lingual transfer a popular NLP research topic. The need to transfer lexical knowledge across languages has given rise to \textit{cross-lingual word embedding models}, i.e., cross-lingual representations of words in a joint embedding space, as illustrated in Figure \ref{fig:shared_embedding_space}.

Cross-lingual word embeddings are appealing for two reasons: First, they enable us {\em to compare the meaning of words across languages}, which is key to bilingual lexicon induction, machine translation, or cross-lingual information retrieval, for example. Second, cross-lingual word embeddings {\em enable model transfer between languages}, e.g., between resource-rich and low-resource languages, by providing a common representation space. This duality is also reflected in how cross-lingual word embeddings are evaluated, as discussed in Section~\ref{sec:evaluation}.


\begin{figure}
\centering
\begin{subfigure}{.5\textwidth}
  \centering
  \includegraphics[width=.9\linewidth]{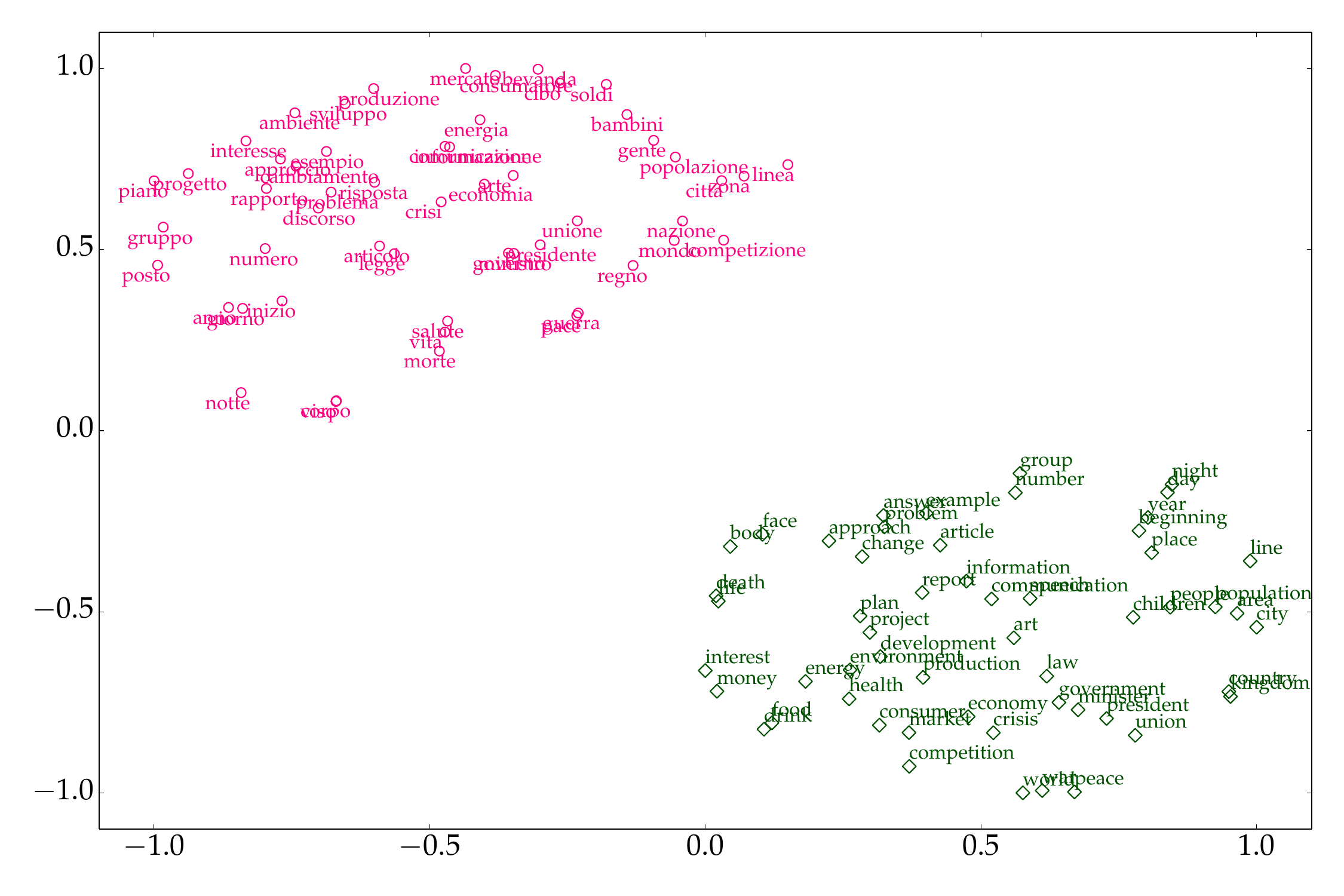}
\end{subfigure}%
\begin{subfigure}{.5\textwidth}
  \centering
  \includegraphics[width=.9\linewidth]{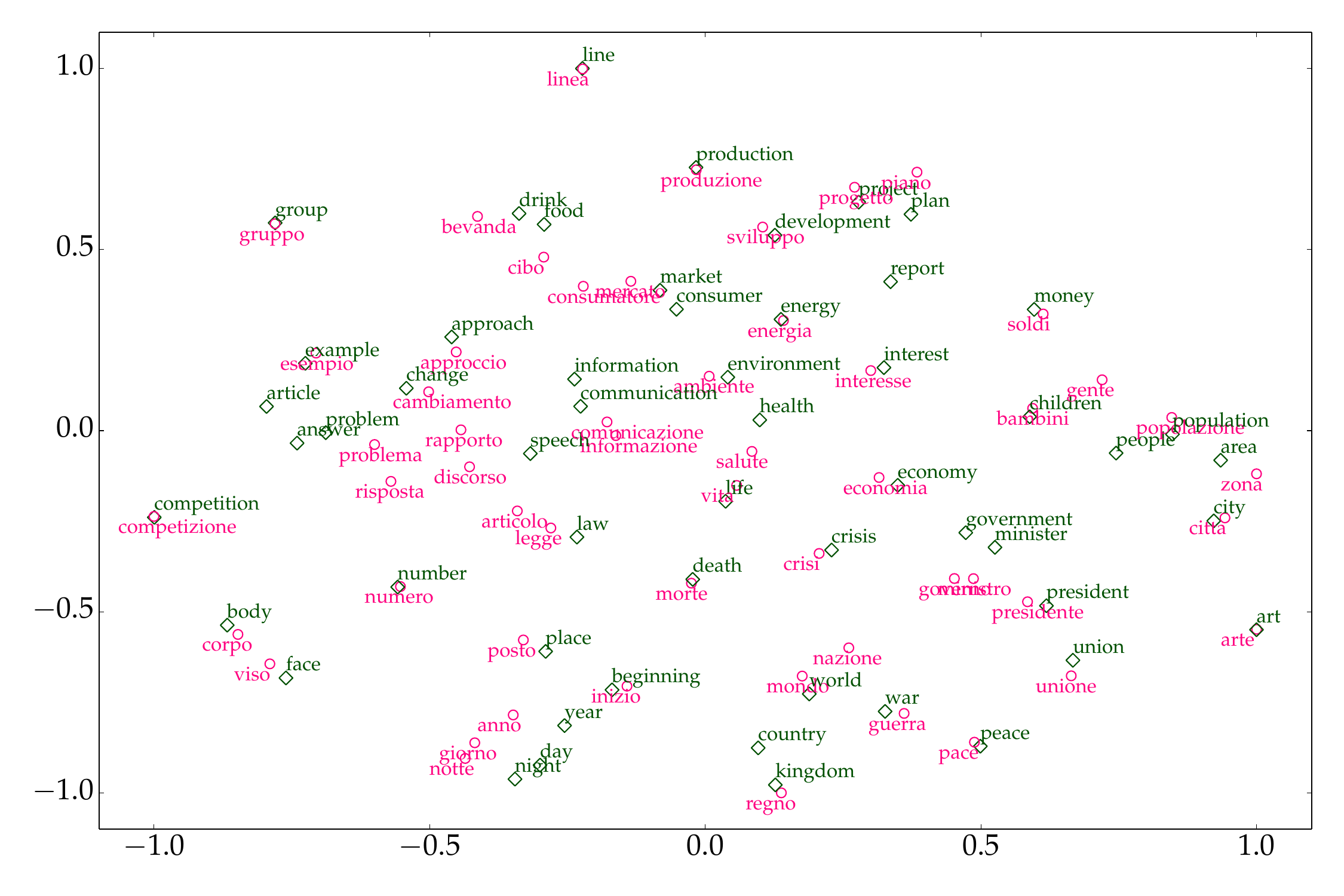}
\end{subfigure}
\caption{Unaligned monolingual word embeddings (left) and word embeddings projected into a joint cross-lingual embedding space (right). Embeddings are visualized with t-SNE.}
\label{fig:shared_embedding_space}
\end{figure}

Many models for learning cross-lingual embeddings have been proposed in recent years. In this survey, we will give a comprehensive overview of existing cross-lingual word embedding models. One of the main goals of this survey is to show the similarities and differences between these approaches. To facilitate this, we first introduce a common notation and terminology in Section \ref{sec:notation_terminology}. Over the course of the survey, we then show that existing cross-lingual word embedding models can be seen as optimizing very similar objectives, where the main source of variation is due to the data used, the monolingual and regularization objectives employed, and how these are optimized. As many cross-lingual word embedding models are inspired by monolingual models, we introduce the most commonly used monolingual embedding models in Section \ref{sec:monolingual_models}. We then motivate and introduce one of the main contributions of this survey, a typology of cross-lingual embedding models in Section \ref{sec:typology}. The typology is based on the main differentiating aspect of cross-lingual embedding models: the nature of the data they require, in particular the type of alignment across languages (alignment of words, sentences, or documents), and whether data is assumed to be parallel or just comparable (about the same topic). The typology allows us to outline similarities and differences more concisely, but also starkly contrasts focal points of research with fruitful directions that have so far gone mostly unexplored. 

Since the idea of cross-lingual representations of words pre-dates word embeddings, we provide a brief history of cross-lingual word representations in Section \ref{sec:brief_history}. Subsequent sections are dedicated to each type of alignment. We discuss cross-lingual word embedding algorithms that rely on word-level alignments in Section \ref{sec:word_level_alignment_models}. Such methods can be further divided into mapping-based approaches, approaches based on pseudo-bilingual corpora, and joint methods. We show that these approaches, \emph{modulo} optimization strategies and hyper-parameters, are nevertheless often equivalent. We then discuss approaches that rely on sentence-level alignments in Section \ref{sec:sentence_level_alignment_models}, and models that require document-level alignments in Section \ref{sec:document_level_alignment_models}. In Section \ref{sec:multilingual_training}, we describe how many bilingual approaches that deal with a pair of languages can be extended to the multilingual setting. We subsequently provide an extensive discussion of the tasks, benchmarks, and challenges of the evaluation of cross-lingual embedding models in Section \ref{sec:evaluation} and outline applications in Section \ref{sec:applications}. We present general challenges and future research directions in learning cross-lingual word representations in Section \ref{sec:challenges}. Finally, we provide our conclusions in Section \ref{sec:conclusion}.

This survey makes the following contributions:
\begin{enumerate}
\item It proposes a general typology that characterizes the differentiating features of cross-lingual word embedding models and provides a compact overview of these models.
\item It standardizes terminology and notation and shows that many cross-lingual word embedding models can be cast as optimizing nearly the same objective functions.
\item It provides a proof that connects the three types of word-level alignment models and shows that these models are optimizing roughly the same objective.
\item It critically examines the standard ways of evaluating cross-lingual embedding models.
\item It describes multilingual extensions for the most common types of cross-lingual embedding models.
\item It outlines outstanding challenges for learning cross-lingual word embeddings and provides suggestions for fruitful and unexplored research directions.
\end{enumerate}

\section{Notation and Terminology} \label{sec:notation_terminology}

For clarity, we list all notation used throughout this survey in Table \ref{tab:notation}. We use bold lower case letters ($\mathbf{x}$) to denote vectors, bold upper case letters ($\mathbf{X}$) to denote matrices, and standard weight letters ($x$) for scalars. We use subscripts with bold letters ($\mathbf{x}_i$) to refer to entire rows or columns and subscripts with standard weight letters for specific elements ($x_i$).

Let $\mathbf{X}^\ell \in \mathbb{R}^{|V^\ell| \times d}$ be a word embedding matrix that is learned for the $\ell$-th of $L$ languages where $V^\ell$ is the corresponding vocabulary and $d$ is the dimensionality of the word embeddings. We will furthermore refer to the word embedding of the $i$-th word in language $\ell$ with the shorthand $\mathbf{x}^\ell_i$ or $\mathbf{x}_i$ if language $\ell$ is clear from context. We will refer to the word corresponding to the $i$-th word embedding $\mathbf{x}_i$ as $w_i$ where $w_i$ is a string. To make this correspondence clearer, we will in some settings slightly abuse index notation and use $\mathbf{x}_{w_i}$ to indicate the embedding corresponding to word $w_i$. We will use $i$ to index words based on their order in the vocabulary $V$, while we will use $k$ to index words based on their order in a corpus $\mathcal{C}$.

Some monolingual word embedding models use a separate embedding for words that occur in the context of other words. We will use $\tilde{x}_i$ as the embedding of the $i$-th context word and detail its meaning in the next section. Most approaches only deal with two languages, a source language $s$ and a target language $t$.

\begin{table}[t]
\centering
\begin{tabular}{r l}
\toprule
Symbol & Meaning \\
\midrule
$\mathbf{X}$ & word embedding matrix \\
$V$ & vocabulary \\
$d$ & word embedding dimensionality \\
$\mathbf{x}^\ell_i$ / $\mathbf{x}_i$ / $\mathbf{x}_{w_i}$ & word embedding of the $i$-th word in language $l$ \\
$\tilde{x}_i$ & word embedding of the i-th context word \\
$w_i$ & word pertaining to embedding $\mathbf{x}_i$ \\
$\mathcal{C}$ & corpus of words / aligned sentences used for training \\
$w_k$ & the $k$-th word in a corpus $\mathcal{C}$ \\
$s$ & source language \\
$t$ & target language \\
$\mathbf{W}^{s \rightarrow t}$ / $\mathbf{W}$ & learned transformation matrix between space of $s$ and $t$ \\
$n$ & number of words used as seed words for learning $\mathbf{W}$\\
$\tau$ & function mapping from source words to their translations\\
$\mathbf{C}^s$ & monolingual co-occurrence matrix in language $s$ \\
$C$ & size of context window around a center word\\
$\mathbf{A}^{s\rightarrow t}$ & cross-lingual co-occurrence matrix / alignment matrix \\
$sent^s_i$ & $i$-th sentence in language $s$ \\
$\mathbf{y}^s_i$ & representation of $i$-th sentence in language $s$ \\
$doc^s_i$ & $i$-th document in language $s$ \\
$\mathbf{z}^s_i$ & representation of $i$-th document in language $s$ \\
$\underline{\mathbf{X}^s}$ & $\mathbf{X}^s$ is kept fixed during optimization \\
$\underbrace{\mathcal{L}^1}_\text{1} + \underbrace{\mathcal{L}^2}_\text{2}$ & $\mathcal{L}^1$ is optimized before $\mathcal{L}^2$  \\ 
\bottomrule
\end{tabular}
\caption{Notation used throughout this survey.}
\label{tab:notation}
\end{table}

Some approaches learn a matrix $\mathbf{W}^{s \rightarrow t}$ that can be used to transform the word embedding matrix $\mathbf{X}^s$ of the source language $s$ to that of the target language $t$. We will designate such a matrix by $\mathbf{W}^{s \rightarrow t} \in \mathbb{R}^{d \times d}$ and $\mathbf{W}$ if the language pairing is unambiguous. These approaches often use $n$ source words and their translations as seed words. In addition, we will use $\tau$ as a function that maps from source words $w_i^s$ to their translation $w_i^t$: $\tau: V^s \rightarrow V^t$. Approaches that learn a transformation matrix are usually referred to as \emph{offline} or \emph{mapping} methods. As one of the goals of this survey is to standardize nomenclature, we will use the term \emph{mapping} in the following to designate such approaches.

Some approaches require a monolingual word-word co-occurrence matrix $\mathbf{C}^s$ in language $s$. In such a matrix, every row corresponds to a word $w^s_i$ and every column corresponds to a context word $w^s_j$. $\mathbf{C}_{ij}^s$ then captures the number of times word $w_i$ occurs with context word $w_j$ usually within a window of size $C$ to the left and right of word $w_i$. In a cross-lingual context, we obtain a matrix of alignment counts $\mathbf{A}^{s\rightarrow t} \in \mathbb{R}^{|V^t| \times |V^s|}$, where each element $\mathbf{A}_{ij}^{s\rightarrow t}$ captures the number of times the $i-$th word in language $t$ was aligned with the $j$-th word in language $s$, with each row normalized to sum to $1$.

Finally, as some approaches rely on pairs of aligned sentences, we use $sent_1^s, \ldots, sent_n^s$ to designate sentences in source language $s$ with representations $\mathbf{y}_1^s, \ldots, \mathbf{y}_n^s$ where $\mathbf{y} \in \mathbb{R}^d$. We analogously refer to their aligned sentences in the target language $t$ as $sent_1^t, \ldots, sent_n^t$ with representations $\mathbf{y}_1^t, \ldots, \mathbf{y}_n^t$. We adopt an analogous notation for representations obtained by approaches based on alignments of documents in $s$ and $t$: $doc_1^s, \ldots, doc_n^s$ and $doc_1^t, \ldots, doc_n^t$ with document representations $\mathbf{z}_1^s, \ldots, \mathbf{z}_n^s$ and $\mathbf{z}_1^t, \ldots, \mathbf{z}_n^t$ respectively where $\mathbf{z} \in \mathbb{R}^d$.

Different notations make similar approaches appear different. Using the same notation across our survey facilitates recognizing similarities between the various cross-lingual word embedding models. Specifically, we intend to demonstrate that cross-lingual word embedding models are trained by minimizing roughly the same objective functions, and that differences in objective are unlikely to explain the observed performance differences \cite{Levy2017}.

The class of objective functions minimized by most cross-lingual word embedding methods (if not all), can be formulated as follows: 
\begin{equation}
J = \mathcal{L}^1 + \ldots + \mathcal{L}^\ell + \Omega
\end{equation}
where $\mathcal{L}^\ell$ is the monolingual loss of the $l$-th language and $\Omega$ is a regularization term. A similar loss was also defined by \citeauthor{Upadhyay2016} (2016). As recent work \cite{Levy2014,Levy2015a} shows that many monolingual losses are very similar, one of the main contributions of this survey is to condense the difference between approaches into a regularization term and to detail the assumptions that underlie different regularization terms.

Importantly, how this objective function is optimized is a key characteristic and differentiating factor between different approaches. The joint optimization of multiple non-convex losses is difficult. Most approaches thus take a step-wise approach and optimize one loss at a time while keeping certain variables fixed. Such a step-wise approach is approximate as it does not guarantee to reach even a local optimum.\footnote{Other strategies such as alternating optimization methods, e.g. EM \cite{dempster1977maximum} could be used with the same objective.} In most cases, we will use a longer formulation such as the one below in order to decompose in what order the losses are optimized and which variables they depend on:
\begin{equation}
J = \underbrace{\mathcal{L}(\mathbf{X}^s) + \mathcal{L}(\mathbf{X}^t)}_\text{1} + \underbrace{\Omega(\underline{\mathbf{X}^s}, \underline{\mathbf{X}^t}, \mathbf{W})}_\text{2}
\end{equation}
The underbraces indicate that the two monolingual loss terms on the left, which depend on $\mathbf{X}^s$ and $\mathbf{X}^t$ respectively, are optimized first. Note that this term decomposes into two separate monolingual optimization problems. Subsequently, $\Omega$ is optimized, which depends on $\underline{\mathbf{X}^s}, \underline{\mathbf{X}^t}, \mathbf{W}$. Underlined variables are kept fixed during optimization of the corresponding loss. 

The monolingual losses are optimized by training one of several monolingual embedding models on a monolingual corpus. These models are outlined in the next section.

\section{Monolingual Embedding Models} \label{sec:monolingual_models}

The majority of cross-lingual embedding models take inspiration from and extend monolingual word embedding models to bilingual settings, or explicitly leverage monolingually trained models. As an important preliminary, we thus briefly review monolingual embedding models that have been used in the cross-lingual embeddings literature.

\paragraph{Latent Semantic Analysis (LSA)} Latent Semantic Analysis \cite{Deerwester1990} has been one of the most widely used methods for learning dense word representations. LSA is typically applied to factorize a sparse word-word co-occurrence matrix $\mathbf{C}$ obtained from a corpus. A common preprocessing method is to replace every entry in $\mathbf{C}$ with its pointwise mutual information (PMI) \cite{Church1990} score:
\begin{equation}
PMI(w_i, w_j) = \log \frac{p(w_i, w_j)}{p(w_i) p(w_j)} = \log \frac{\#(w_i, w_j) \cdot |\mathcal{C}|}{\#(w_i) \cdot \#(w_j)}
\end{equation}
where $\#(\cdot)$ counts the number of (co-)occurrences in the corpus $\mathcal{C}$. As for unobserved word pairs, $PMI(w_i, w_j) = \log 0 = \infty$, such values are often set to $PMI(w_i, w_j) = 0$, which is also known as positive PMI.

The PMI matrix $\mathbf{P}$ where $\mathbf{P}_{i, j} = PMI(w_i, w_j)$ is then factorized using singular value decomposition (SVD), which decomposes $\mathbf{P}$ into the product of three matrices:
\begin{equation} \label{eq:lsa}
\mathbf{P} = \mathbf{U} \mathbf{\Psi} \mathbf{V}^\top 
\end{equation}
where $\mathbf{U}$ and $\mathbf{V}$ are in column orthonormal form and $\mathbf{\Psi}$ is a diagonal matrix of singular values. We subsequently obtain the word embedding matrix $\mathbf{X}$ by reducing the word representations to dimensionality $k$ the following way:
\begin{equation}
\mathbf{X} = \mathbf{U}_k \mathbf{\Psi}_k
\end{equation}
where $\mathbf{\Psi}_k$ is the diagonal matrix containing the top $k$ singular values and $\mathbf{U}_k$ is obtained by selecting the corresponding columns from $\mathbf{U}$.

\paragraph{Max-margin loss (MML)} \citeauthor{Collobert2008} (2008) learn word embeddings by training a model on a corpus $\mathcal{C}$ to output a higher score for a correct word sequence than for an incorrect one. For this purpose, they use a max-margin or hinge loss\footnote{Equations in the literature slightly differ in how they handle corpus boundaries. To make comparing between different monolingual methods easier, we define the sum as starting with the $(C+1)$-th word in the corpus $\mathcal{C}$ (so that the first window includes the first word $w_1$) and ending with the $(|\mathcal{C}|-C)$-th word (so that the final window includes the last word $w_{|\mathcal{C}|}$).}:
\begin{equation}
\begin{split}
\mathcal{L}_{\text{MML}} = \sum_{k=C+1}^{|\mathcal{C}|-C} \sum_{w' \in V} \max(0, 1 & - f([\mathbf{x}_{w_{k-C}}, \ldots, \mathbf{x}_{w_i}, \ldots, \mathbf{x}_{w_{k+C}}]) \\
& + f([\mathbf{x}_{w_{k-C}}, \ldots, \mathbf{x}_{w'}, \ldots, \mathbf{x}_{w_{k+C}}]))
\end{split}
\end{equation}
The outer sum iterates over all words in the corpus $\mathcal{C}$, while the inner sum iterates over all words in the vocabulary. Each word sequence consists of a center word $w_k$ and a window of $C$ words to its left and right. The neural network, which is given by the function $f(\cdot)$, consumes the sequence of word embeddings corresponding to the window of words and outputs a scalar. Using this max-margin loss, it is trained to produce a higher score for a word window occurring in the corpus (the top term) than a word sequence where the center word is replaced by an arbitrary word $w'$ from the vocabulary (the bottom term).

\paragraph{Skip-gram with negative sampling (SGNS)} Skip-gram with negative sampling \cite{Mikolov2013a} is arguably the most popular method to learn monolingual word embeddings due to its training efficiency and robustness \cite{Levy2015a}. SGNS approximates a language model but focuses on learning efficient word representations rather than accurately modeling word probabilities. It induces representations that are good at predicting surrounding context words given a target word $w_k$.
To this end, it minimizes the negative log-likelihood of the training data under the following \textit{skip-gram} objective:
\begin{equation} \label{eq:sgns}
\mathcal{L}_{\text{SGNS}} = - \dfrac{1}{|\mathcal{C}|-C} \sum\limits_{k=C+1}^{|\mathcal{C}|-C} \sum\limits_{-C \leq j \leq C, j \neq 0} \text{log} \: P(w_{k+j} \:|\: w_k)
\end{equation}
$P(w_{k+j} \:|\: w_k)$ is computed using the softmax function:
\begin{align}
P(w_{k+j} \:|\: w_k) = \dfrac{\text{exp}(\mathbf{\tilde{x}}_{w_{k+j}}{}^\top \mathbf{x}_{w_k})}{\sum_{i=1}^{|V|} \text{exp}(\mathbf{\tilde{x}}_{w_i}{}^\top \mathbf{x}_{w_k})}
\end{align}
where $\mathbf{x}_i$ and $\mathbf{\tilde{x}}_i$ are the word and context word embeddings of word $w_i$ respectively. The skip-gram architecture can be seen as a simple neural network: The network takes as input a one-hot representation of a word $\in \mathbb{R}^{|V|}$ and produces a probability distribution over the vocabulary $\in \mathbb{R}^{|V|}$. The embedding matrix $\mathbf{X}$ and the context embedding matrix $\mathbf{\tilde{X}}$ are simply the input-hidden and (transposed) hidden-output weight matrices respectively. The neural network has no nonlinearity, so is equivalent to a matrix product (similar to Equation \ref{eq:lsa}) followed by softmax. 

As the partition function in the denominator of the softmax is expensive to compute, SGNS uses Negative Sampling, which approximates the softmax to make it computationally more efficient. Negative sampling is a simplification of Noise Contrastive Estimation \cite{Gutmann2012}, which was applied to language modeling by \citeauthor{Mnih2012} (2012). Similar to noise contrastive estimation, negative sampling trains the model to distinguish a target word $w_k$ from negative samples drawn from a `noise distribution' $P_n$. In this regard, it is similar to MML as defined above, which ranks true sentences above noisy sentences. Negative sampling is defined as follows:
\begin{equation} \label{eq:negative_sampling}
P(w_{k+j} \:|\: w_k) = \log \sigma(\mathbf{\tilde{x}}_{w_{k+j}}{}^\top \mathbf{x}_{w_k}) + \sum^N_{i=1} \mathbb{E}_{w_i \sim P_n} \log \sigma(-\mathbf{\tilde{x}}_{w_i}{}^\top \mathbf{x}_{w_k})
\end{equation}
where $\sigma$ is the sigmoid function $\sigma(x) = 1/(1+e^{-x})$ and $N$ is the number of negative samples. The distribution $P_n$ is empirically set to the unigram distribution raised to the $3/4^{th}$ power. \citeauthor{Levy2014} \citeyear{Levy2014} observe that negative sampling does not in fact minimize the negative log-likelihood of the training data as in Equation \ref{eq:sgns}, but rather implicitly factorizes a shifted PMI matrix similar to LSA.

\paragraph{Continuous bag-of-words (CBOW)} While skip-gram predicts each context word separately from the center word, 
continuous bag-of-words jointly predicts the center word based on all context words. The model receives as input a window of $C$ context words and seeks to predict the target word $w_k$ by minimizing the CBOW objective:
\begin{equation}
\mathcal{L}_{\text{CBOW}} = - \dfrac{1}{|\mathcal{C}|-C} \sum\limits_{k=C+1}^{|\mathcal{C}|-C} \text{log} \: P(w_k \:|\: w_{k-C}, \ldots, w_{k-1}, w_{k+1}, \ldots, w_{k+C})
\end{equation}

\begin{equation} \label{eq:cbow}
P(w_k \:|\: w_{k-C}, \ldots, w_{k+C}) = \dfrac{\text{exp}(\mathbf{\tilde{x}}_{w_k}{}^\top \mathbf{\bar{x}}_{w_k})}{\sum_{i=1}^{|V|} \text{exp}(\mathbf{\tilde{x}}_{w_i}{}^\top \mathbf{\bar{x}}_{w_k})}
\end{equation}
where $\mathbf{\bar{x}}_{w_k}$ is the sum of the word embeddings of the words $w_{k-C}, \ldots, w_{k+C}$, i.e. $\mathbf{\bar{x}}_{w_k} = \sum_{-C \leq j \leq C, j \neq 0} \mathbf{x}_{w_{k+j}}$.
The CBOW architecture is typically also trained with negative sampling for the same reason as the skip-gram model.

\paragraph{Global vectors (GloVe)} Global vectors \cite{Pennington2014} allows us to learn word representations via matrix factorization. GloVe minimizes the difference between the dot product of the embeddings of a word $\mathbf{x}_{w_i}$ and its context word $\mathbf{\tilde{x}}_{w_j}$ and the logarithm of their number of co-occurrences within a certain window size\footnote{GloVe favors slightly larger window sizes (up to 10 words to the right and to the left of the target word) compared to SGNS \cite{Levy2015a}.}:

\begin{equation}
\mathcal{L}_{\text{GloVe}} = \sum\limits^{|V|}_{i,j=1} f(\mathbf{C}_{ij}) (\mathbf{x}_{w_i}{}^\top \mathbf{\tilde{x}}_{w_j} + b_i + \tilde{b}_j - \text{log} \: \mathbf{C}_{ij})^2
\end{equation}
where $b_i$ and $\tilde{b}_j$ are the biases corresponding to word $w_i$ and word $w_j$, $\mathbf{C}_{ij}$ captures the number of times word $w_i$ occurs with word $w_j$, and $f(\cdot)$ is a weighting function that assigns relatively lower weight to rare and frequent co-occurrences. If we fix $b_i = \log \#(w_i)$ and $\tilde{b}_j = \log \#(w_j)$, then GloVe is equivalent to factorizing a PMI matrix, shifted by $\log |\mathcal{C}|$ \cite{Levy2015a}.

\section{Cross-Lingual Word Embedding Models: Typology} \label{sec:typology}

Recent work on cross-lingual embedding models suggests that the actual choice of bilingual supervision signal---that is, the data a method requires to learn to align a cross-lingual representation space---is more important for the final model performance than the actual underlying architecture \cite{Upadhyay2016,Levy2017}.
In other words, large differences between models typically stem from their data requirements, while other fine-grained differences are artifacts of the chosen architecture, hyper-parameters, and additional tricks and fine-tuning employed. This directly mirrors the argument raised by \citeauthor{Levy2015a} (2015) regarding monolingual embedding models: They observe that the architecture is less important as long as the models are trained under identical conditions on the same type (and amount) of data.

We therefore base our typology on the data requirements of the cross-lingual word embedding methods, as this accounts for much of the variation in performance. In particular, methods differ with regard to the data they employ along the following two dimensions:

\begin{enumerate}
	\item \textbf{Type of alignment}: Methods use different types of bilingual supervision signals (at the level of words, sentences, or documents), which introduce stronger or weaker supervision.
    \item \textbf{Comparability}: Methods require either \emph{parallel} data sources, that is, exact translations in different languages or \emph{comparable} data that is only similar in some way.
	
\end{enumerate}

\begin{table}
\centering
\begin{tabular}{l|ll}
\toprule
& Parallel & Comparable \\
\midrule Word & Dictionaries & Images \\
Sentence & Translations & Captions \\
Document & - & Wikipedia \\
\bottomrule
\end{tabular}
\caption{Nature and alignment level of bilingual data sources required by cross-lingual embedding models.}
\label{tab:types_cross-lingual_embedding_models}
\end{table}

\begin{figure}[t!]
    \centering
    \begin{subfigure}[t]{0.19\textwidth}
        \centering
       \includegraphics[height=1.2in]{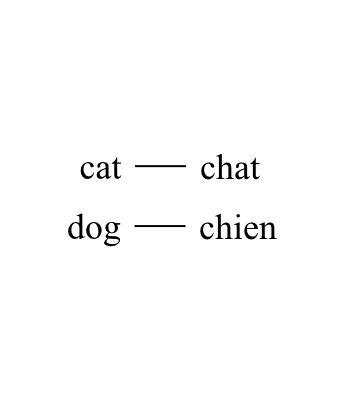}
        \caption{Word, par.} \label{fig:word_par}
    \end{subfigure}%
    ~ 
    \begin{subfigure}[t]{0.19\textwidth}
        \centering
       \includegraphics[height=1.2in]{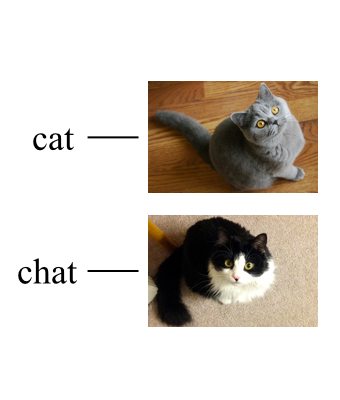}
        \caption{Word, comp.} \label{fig:word_comp}
    \end{subfigure}
    \begin{subfigure}[t]{0.19\textwidth}
        \centering
       \includegraphics[height=1.2in]{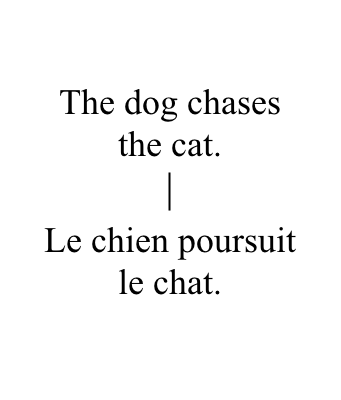}
        \caption{Sentence, par.} \label{fig:sent_par}
    \end{subfigure}
    \begin{subfigure}[t]{0.19\textwidth}
        \centering
       \includegraphics[height=1.2in]{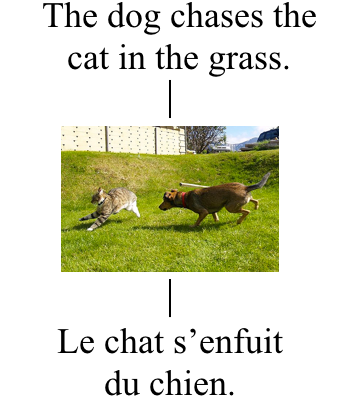}
        \caption{Sentence, comp.} \label{fig:sent_comp}
    \end{subfigure}
    \begin{subfigure}[t]{0.20\textwidth}
        \centering
       \includegraphics[height=1.2in]{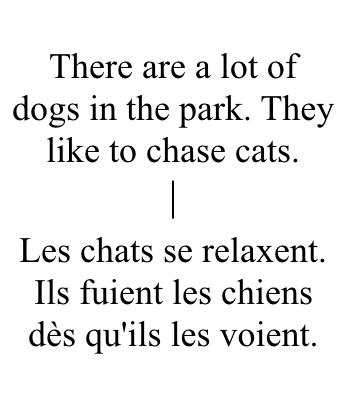}
        \caption{Doc., comp.} \label{fig:doc_comp}
    \end{subfigure}
    \caption{Examples for the nature and type of alignment of data sources. Par.: parallel. Comp.: comparable. Doc.: document. From left to right, word-level parallel alignment in the form of a bilingual lexicon (\ref{fig:word_par}), word-level comparable alignment using images obtained with Google search queries (\ref{fig:word_comp}), sentence-level parallel alignment with translations (\ref{fig:sent_par}), sentence-level comparable alignment using translations of several image captions (\ref{fig:sent_comp}), and document-level comparable alignment using similar documents (\ref{fig:doc_comp}).} \label{fig:examples_data_types}
\end{figure}

In particular, there are three different types of alignments that are possible, which are required by different methods. We discuss the typical data sources for both parallel and comparable data based on the following alignment signals:

\begin{enumerate}
\item \textbf{Word alignment}: Most approaches use parallel word-aligned data in the form of bilingual or cross-lingual dictionary with pairs of translations between words in different languages \cite{Mikolov2013b,Faruqui2014}. Parallel word alignment can also be obtained by automatically aligning words in a parallel corpus (see below), which can be used to produce bilingual dictionaries. Throughout this survey, we thus do not differentiate between the source of word alignment, whether it comes from type-aligned (dictionaries) or token-aligned data (automatically aligned parallel corpora). Comparable word-aligned data, even though more plentiful, has been leveraged less often and typically involves other modalities such as images \cite{Bergsma2011,Kiela2015b}.
\item \textbf{Sentence alignment}: Sentence alignment requires a parallel corpus, as commonly used in machine translation (MT). Methods typically use the Europarl corpus \cite{Koehn2005}, which consists of sentence-aligned text from the proceedings of the European parliament, and is perhaps the most common source of training data for MT models \cite{Hermann2013,Lauly2013}. Other methods use available word-level alignment information \cite{Zou2013,Shi2015}.
There has been some work on extracting parallel data from comparable corpora \cite{Munteanu2006}, but no-one has so far trained cross-lingual word embeddings on such data. 
Comparable data with sentence alignment may again leverage another modality, such as captions of the same image or similar images in different languages, which are not translations of each other \cite{Calixto2017,Gella2017}.
\item \textbf{Document alignment}: Parallel document-aligned data requires documents in different languages that are translations of each other. This is rare, as parallel documents typically consist of aligned sentences \cite{Hermann2014}. Comparable document-aligned data is more common and can occur in the form of documents that are topic-aligned (e.g. Wikipedia) or class-aligned (e.g. sentiment analysis and multi-class classification datasets) \cite{Vulic:2013emnlp,Mogadala2016}.
\end{enumerate}

We summarize the different types of data required by cross-lingual embedding models along these two dimensions in Table \ref{tab:types_cross-lingual_embedding_models} and provide examples for each in Figure \ref{fig:examples_data_types}. Over the course of this survey we will show that models that use a particular type of data are mostly variations of the same or similar architectures. We present our complete typology of cross-lingual embedding models in Table \ref{tab:cross-lingual_embedding_models}, aiming to provide an exhaustive overview by classifying each model (we are aware of) into one of the corresponding model types. We also provide a more detailed overview of the monolingual objectives and regularization terms used by every approach towards the end of this survey in Table \ref{tab:overview_objectives}.

\begin{table}
\centering
\resizebox*{\textwidth}{\textheight}{%
{\footnotesize
\begin{tabular}{l|ll}
\toprule
& {\bf Parallel} & {\bf Comparable}  \\
\midrule
\multirow{16}{*}{\shortstack[l]{\bf Word\\\bf---Mapping}} & Mikolov et al. (2013b) & Bergsma and Van Durme (2011) \\
& Faruqui and Dyer (2014) & Kiela et al. (2015) \\
& Lazaridou et al. (2015) & Vulić et al. (2016) \\
& Dinu et al. (2015) &  \\
& Xing et al. (2015) & \\ 
& Lu et al. (2015) & \\
& Vulić and Korhonen (2016) &  \\
& Ammar et al. (2016b) & \\
& Zhang et al. (2016b, 2017ab) &  \\
& Artexte et al. (2016, 2017, 2018ab) & \\
& Smith et al. (2017) & \\
& Hauer et al. (2017) & \\
& Mrkšić et al. (2017b) & \\
& Conneau et al. (2018a) & \\
& Joulin et al. (2018) & \\
& Alvarez-Melis and Jaakkola (2018) & \\
& Ruder et al. (2018) & \\
& Glavaš et al. (2019) & \\
\midrule
\multirow{4}{*}{\shortstack[l]{\bf Word\\\bf---Pseudo-\\\bf bilingual}} & Xiao and Guo (2014) & Duong et al. (2015) \\
& \multicolumn{2}{c}{Gouws and Søgaard (2015)} \\
& Duong et al. (2016) & \\
& Adams et al. (2017) & \\
\midrule
\multirow{2}{*}{\shortstack[l]{\bf Word\\\bf---Joint}} & Klementiev et al. (2012) \\
& Kočiský et al. (2014) & \\\midrule \midrule
\multirow{5}{*}{\shortstack[l]{\bf Sentence\\\bf---Matrix\\\bf factorization}} & Zou et al. (2013) & \\
& Shi et al. (2015) & \\
& Gardner et al. (2015) & \\
& Guo et al. (2015) & \\
& Vyas and Carpuat (2016) &  \\ \midrule
\multirow{2}{*}{\shortstack[l]{\bf Sentence\\\bf---Compositional}} & Hermann and Blunsom (2013, 2014) & \\
& Soyer et al. (2015) & \\ \midrule
\multirow{2}{*}{\shortstack[l]{\bf Sentence\\\bf---Autoencoder}} & Lauly et al. (2013) & \\
& Chandar et al. (2014) & \\ \midrule
\multirow{4}{*}{\shortstack[l]{\bf Sentence\\\bf---Skip-gram}} & Gouws et al. (2015) & \\
& Luong et al. (2015) & \\
& Coulmance et al. (2015) & \\
& Pham et al. (2015) & \\ \midrule
\multirow{2}{*}{\shortstack[l]{\bf Sentence\\\bf---Other}} & Levy et al. (2017) & Calixto et al. (2017) \\
& Rajendran et al. (2016) & Gella et al. (2017) \\\midrule \midrule
\multirow{3}{*}{\bf Document} & & Vulić and Moens (2013a, 2014, 2016) \\
& & Søgaard et al. (2015)\\
& & Mogadala and Rettinger (2016) \\\bottomrule
\end{tabular}
}%
}
\caption{Cross-lingual embedding models ordered by data requirements.}
\label{tab:cross-lingual_embedding_models}
\end{table}

\section{A Brief History of Cross-Lingual Word Representations} \label{sec:brief_history}
We provide a brief overview of the historical lineage of cross-lingual word embedding models. In brief, while cross-lingual word embeddings is a novel phenomenon, many of the high-level ideas that motivate current research in this area, can be found 
in work that pre-dates the popular introduction of word embeddings. This includes work on learning cross-lingual word representations from seed lexica, parallel data, or document-aligned data, as well as ideas on learning from limited bilingual supervision.

Language-independent representations have been proposed for decades, many of which rely on abstract linguistic labels instead of lexical features \cite{Aone:1993acl,Schultz:2001sc}. This is also the strategy used in early work on so-called {\em delexicalized}~cross-lingual and domain transfer \cite{Zeman:2008ijcnlp,Soegaard:11:dp,McDonald:2011emnlp,Cohen:ea:11,Tackstrom:2012naacl,Henderson:2014slt}, as well as in work on inducing cross-lingual word clusters \cite{Tackstrom:2012naacl,Faruqui:2013acl}, and cross-lingual word embeddings relying on syntactic/POS contexts \cite{Duong:2015conll,Dehouck:2017eacl}.\footnote{Along the same line, the recent initiative on providing cross-linguistically consistent sets of such labels \cite<e.g., Universal Dependencies,>{Nivre:2015ud} facilitates cross-lingual transfer and offers further support to the induction of word embeddings across languages \cite{Vulic:2017eacl,Vulic:2017conll}.} The ability to represent lexical items from two different languages in a shared cross-lingual space supplements seminal work in cross-lingual transfer by providing fine-grained word-level links between languages; see work in cross-lingual dependency parsing \cite{Ammar:2016tacl,Zeman:2017conll} and natural language understanding systems (Mrkšić et al., 2017b).

Similar to our typology of cross-lingual word embedding models outlined in Table~\ref{tab:cross-lingual_embedding_models} based on bilingual data requirements from Table~\ref{tab:types_cross-lingual_embedding_models}, one can  also arrange older cross-lingual representation architectures into similar categories. A traditional approach to cross-lingual vector space induction was based on high-dimensional context-counting vectors where each dimension encodes the (weighted) co-occurrences with a specific context word in each of the languages. The vectors are then \textit{mapped} into a single cross-lingual space using a seed bilingual dictionary containing paired context words from both sides \cite[inter alia]{Rapp:1999acl,Gaussier:2004acl,Laroche:2010coling,Tamura:2012emnlp}. This approach is an important predecessor to the cross-lingual word embedding models described in Section~\ref{sec:word_level_alignment_models}. Similarly, the bootstrapping technique developed for traditional context-counting approaches \cite{Peirsman:2010naacl,Vulic:2013emnlp} is an important predecessor to recent iterative self-learning techniques used to limit the bilingual dictionary seed supervision needed in mapping-based approaches \cite{Hauer2017,Artetxe:2017acl}. The idea of CCA-based word embedding learning \cite<see later in Section~\ref{sec:word_level_alignment_models};>{Faruqui2014,Lu:2015naacl} was also introduced a decade earlier \cite{Haghighi:2008acl}; their word~additionally discussed the idea of combining orthographic subword features with distributional signatures for cross-lingual representation learning: This idea re-entered the literature recently \cite{Heyman:2017eacl}, only now with much better performance.

Cross-lingual word embeddings can also be directly linked to the work on word alignment for statistical machine translation \cite{Brown:1993cl,Och:2003cl}. \citeauthor{Levy2017} (2017) stress that word translation probabilities extracted from sentence-aligned parallel data by IBM alignment models can also act as the cross-lingual semantic similarity function in lieu of the cosine similarity between word embeddings. Such word translation tables are then used to induce bilingual lexicons. For instance, aligning each word in
a given source language sentence with the most similar target language word from the target language sentence is exactly the same greedy decoding algorithm that is implemented in IBM Model 1. Bilingual dictionaries and cross-lingual word clusters derived from word alignment links can be used to boost cross-lingual transfer for applications such as syntactic parsing \cite{Tackstrom:2012naacl,Durrett:2012emnlp}, POS tagging \cite{Agic:2015acl}, or semantic role labeling \cite{Kozhevnikov:2013acl} by relying on shared lexical information stored in the bilingual lexicon entries. Exactly the same functionality can be achieved by cross-lingual word embeddings. However, cross-lingual word embeddings have another advantage in the era of neural networks: the continuous representations can be plugged into current end-to-end neural architectures directly as sets of lexical features. 

A large body of work on multilingual probabilistic topic modeling \cite{Vulic:2015ipm,Boyd:2017book} also extracts shared cross-lingual word spaces, now by means of conditional latent topic probability distributions: two words with similar distributions over the induced latent variables/topics are considered semantically similar. The learning process is again steered by the data requirements. The early days witnessed the use of pseudo-bilingual corpora constructed by merging aligned document pairs, and then applying a monolingual representation model such as LSA \cite{Landauer:1997} or LDA \cite{Blei:2003jmlr} on top of the merged data \cite{Littman:1998,DeSmet:2011pakdd}. This approach is very similar to the pseudo-cross-lingual approaches discussed in Section~\ref{sec:word_level_alignment_models} and Section~\ref{sec:document_level_alignment_models}. More recent topic models learn on the basis of parallel word-level information, enforcing word pairs from seed bilingual lexicons (again!) to obtain similar topic distributions \cite{Boyd:2009uai,Zhang:2010acl,Boyd:2010emnlp,Jagarlamudi:2010ecir}. In consequence, this also influences topic distributions of related words not occurring in the dictionary. Another group of models utilizes alignments at the document level \cite{Mimno:2009emnlp,Platt:2010emnlp,Vulic:2011acl,Fukumasu:2012nips,Heyman:2016dami} to induce shared topical spaces. The very same level of supervision (i.e., document alignments) is used by several cross-lingual word embedding models, surveyed in Section~\ref{sec:document_level_alignment_models}. Another embedding model based on the document-aligned Wikipedia structure \cite{Sogaard2015} bears resemblance with the cross-lingual Explicit Semantic Analysis model \cite{Gabrilovich:2006aaai,Hassan:2009emnlp,Sorg:2012dke}.

All these ``historical'' architectures measure the strength of cross-lingual word similarities through metrics defined in the cross-lingual space: e.g., Kullback-Leibler or Jensen-Shannon divergence (in a topic space), or vector inner products (in sparse context-counting vector space),
and are therefore applicable to NLP tasks that rely cross-lingual similarity scores. 
The pre-embedding architectures and more recent cross-lingual word embedding methods have been applied to an overlapping set of evaluation tasks and applications, ranging from bilingual lexicon induction to cross-lingual knowledge transfer, including cross-lingual parser transfer \cite{Tackstrom:2012naacl,Ammar:2016tacl}, cross-lingual document classification \cite{Gabrilovich:2006aaai,DeSmet:2011pakdd,Klementiev2012,Hermann2014}, cross-lingual relation extraction \cite{Faruqui:2015naaclshort}, etc. In summary, while sharing the goals and assumptions of older cross-lingual architectures, cross-lingual word embedding methods have capitalized on the recent methodological and algorithmic advances in the field of representation learning, owing their wide use to their simplicity, efficiency and handling of large corpora, as well as their relatively robust performance across domains.



\section{Word-Level Alignment Models} \label{sec:word_level_alignment_models}

In the following, we will now discuss different types of the current generation of cross-lingual embedding models, starting with models based on word-level alignment. Among these, models based on parallel data are more common. 

\subsection{Word-level Alignment Methods with Parallel Data}

We distinguish three methods that use parallel word-aligned data: 

\begin{itemize}
\item[a)] \textbf{Mapping-based approaches} that first train monolingual word representations independently on large monolingual corpora and then seek to learn a transformation matrix that maps representations in one language to the representations of the other language. They learn this transformation from word alignments or bilingual dictionaries (we do not see a need to distinguish between the two). 
\item[b)] \textbf{Pseudo-multi-lingual corpora-based approaches} that use monolingual word embedding methods on automatically constructed (or corrupted) corpora that contain words from both the source and the target language. 
\item[c)] \textbf{Joint methods} that take parallel text as input and minimize the source and target language monolingual losses jointly with the cross-lingual regularization term. 
\end{itemize}

We will show that {\em modulo} optimization strategies, these approaches are equivalent. Before discussing the first category of methods, we briefly introduce two concepts that are of relevance in these and the subsequent sections.

\paragraph{Bilingual lexicon induction} Bilingual lexicon induction is the intrinsic task that is most commonly used to evaluate current cross-lingual word embedding models. Briefly, given a list of $N$ language word forms $w_1^s, \ldots, w_N^s$, the goal is to determine the most appropriate translation $w_i^t$, for each query form $w_i^s$. This is commonly accomplished by finding a target language word whose embedding $\mathbf{x}^t_i$ is the nearest neighbour to the source word embedding $\mathbf{x}_i^s$ in the shared semantic space, where similarity is usually computed as the cosine similarity between their embeddings. See Section \ref{sec:evaluation} for more details.

\paragraph{Hubness} Hubness \cite{radovanovic2010hubs} is a phenomenon observed in high-dimensional spaces where some points (known as \emph{hubs}) are the nearest neighbours of many other points. As translations are assumed to be nearest neighbours in cross-lingual embedding space, hubness has been reported to affect cross-lingual word embedding models.

\subsubsection{Mapping-based Approaches}

Mapping-based approaches are by far the most prominent category of cross-lingual word embedding models and---due to their conceptual simplicity and ease of use---are currently the most popular. Mapping-based approaches aim to learn a mapping from the monolingual embedding spaces to a joint cross-lingual space. Approaches in this category differ along multiple dimensions:
\begin{enumerate}
\item \textbf{The mapping method} that is used to transform the monolingual embedding spaces into a cross-lingual embedding space.
\item \textbf{The seed lexicon} that is used to learn the mapping.
\item \textbf{The refinement} of the learned mapping.
\item \textbf{The retrieval} of the nearest neighbours.
\end{enumerate}

\subsubsection*{Mapping Methods}

There are four types of mapping methods that have been proposed:
\begin{enumerate}
\item \textbf{Regression methods} map the embeddings of the source language to the target language space by maximizing their similarity. 
\item \textbf{Orthogonal methods} map the embeddings in the source language to maximize their similarity with the target language embeddings, but constrain the transformation to be orthogonal.
\item \textbf{Canonical methods} map the embeddings of both languages to a new shared space, which maximizes their similarity.
\item \textbf{Margin methods} map the embeddings of the source language to maximize the margin between correct translations and other candidates.
\end{enumerate}

\begin{figure}[!hb]
      \centering
         \includegraphics[width=0.6 \linewidth]{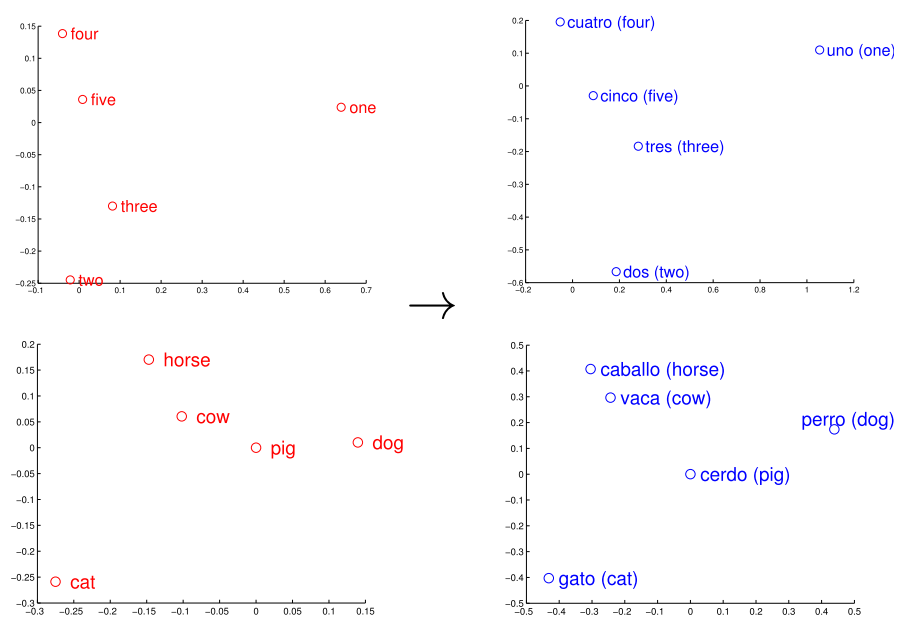}
	\caption{Similar geometric relations between numbers and animals in English and Spanish \cite{Mikolov2013b}. Words embeddings are projected to two dimensions using PCA and were manually rotated to emphasize similarities.}
	\label{fig:geometric_relations}
\end{figure}

\paragraph{Regression methods}

One of the most influential methods for learning a mapping is the linear transformation method by \citeauthor{Mikolov2013b} \citeyear{Mikolov2013b}. The method is motivated by the observation that words and their translations show similar geometric constellations in monolingual embedding spaces after an appropriate linear transformation is applied, as illustrated in Figure \ref{fig:geometric_relations}. This suggests that it is possible to transform the vector space of a source language $s$ to the vector space of the target language $t$ by learning a linear projection with a transformation matrix $\mathbf{W}^{s \rightarrow t}$. We use $\mathbf{W}$ in the following if the direction is unambiguous.

Using the $n$ most frequent words from the source language $w^s_1, \ldots, w^s_n$ and their translations $w^t_1, \ldots, w^t_n$ as seed words, they learn $\mathbf{W}$ using stochastic gradient descent by minimising the squared Euclidean distance (mean squared error, MSE) between the previously learned monolingual representations of the source seed word $\mathbf{x}_i^s$ that is transformed using $\mathbf{W}$ and its translation $\mathbf{x}_i^t$ in the bilingual dictionary:

\begin{equation} \label{eq:mikolov-mapping}
\Omega_{\text{MSE}} = \sum\limits^n_{i=1} \|\mathbf{W} \mathbf{x}_i^s - \mathbf{x}_i^t \|^2 
\end{equation}

This can also be written in matrix form as minimizing the squared Frobenius norm of the residual matrix:
\begin{equation}
\Omega_{\text{MSE}} = \|\mathbf{W}\mathbf{X}^s - \mathbf{X}^t\|^2_F 
\end{equation}
where $\mathbf{X}^s$ and $\mathbf{X}^t$ are the embedding matrices of the seed words in the source and target language respectively. Analogously, the problem can be seen as finding a least squares solution to a system of linear equations with multiple right-hand sides:
\begin{equation}
\mathbf{W}\mathbf{X}^s = \mathbf{X}^t
\end{equation}
A common solution to this problem enables calculating $\mathbf{W}$ analytically as $\mathbf{W} = \mathbf{X}^+ \mathbf{X}^t$ where $\mathbf{X}^{+} = (\mathbf{X}^s{}^\top \mathbf{X}^s)^{-1} \mathbf{X}^s{}^\top$ is the Moore-Penrose pseudoinverse.

In our notation, the MSE mapping approach can be seen as optimizing the following objective:
\begin{equation} \label{eq:mse}
J = \underbrace{\mathcal{L}_{\text{SGNS}}(\mathbf{X}^s) + \mathcal{L}_{\text{SGNS}}(\mathbf{X}^t)}_\text{1} + \underbrace{\Omega_{\text{MSE}}(\underline{\mathbf{X}^s}, \underline{\mathbf{X}^t}, \mathbf{W})}_\text{2}
\end{equation}
First, each of the monolingual losses is optimized independently. Second, the regularization term $\Omega_{\text{MSE}}$ is optimized while keeping the induced monolingual embeddings fixed. The basic approach of \citeauthor{Mikolov2013b} \citeyear{Mikolov2013b} has later been adopted by many others who for instance incorporated $\ell_2$ regularization \cite{Dinu2015}. A common preprocessing step that is applied to the monolingual embeddings is to normalize the monolingual embeddings to be unit length. \citeauthor{Xing2015} (2015) argue that this normalization resolves an inconsistency between the metric used for training (dot product) and the metric used for evaluation (cosine similarity).\footnote{For unit vectors, dot product and cosine similarity are equivalent.} \citeauthor{Artetxe2016} (2016) motivate length normalization to ensure that all training instances contribute equally to the objective.

\paragraph{Orthogonal methods} The most common way in which the basic regression method of the previous section has been improved is to constrain the transformation $\mathbf{W}$ to be orthogonal, i.e. $\mathbf{W}^\top \mathbf{W} = \mathbf{I}$. The exact solution under this constraint is $\mathbf{W} = \mathbf{V} \mathbf{U}^\top$ and can be efficiently computed in linear time with respect to the vocabulary size using SVD where $\mathbf{X}^t{}^\top \mathbf{X}^s = \mathbf{U} \mathbf{\Sigma} \mathbf{V}^\top$. This constraint is motivated by \citeauthor{Xing2015} (2015) to preserve length normalization. \citeauthor{Artetxe2016} (2016) motivate orthogonality as a means to ensure monolingual invariance. An orthogonality constraint has also been used to regularize the mapping \cite{Zhang2016,Zhang2017} and has been motivated theoretically to be self-consistent \cite{Smith2017}.

\paragraph{Canonical methods} Canonical methods map the embeddings in both languages to a shared space using Canonical Correlation Analysis (CCA). \citeauthor{Haghighi:2008acl} \citeyear{Haghighi:2008acl} were the first to use this method for learning translation lexicons for the words of different languages. \citeauthor{Faruqui2014} (2014) later applied CCA to project words from two languages into a shared embedding space. Whereas linear projection only learns one transformation matrix $\mathbf{W}^{s\rightarrow t}$ to project the embedding space of a source language into the space of a target language, CCA learns a transformation matrix for the source and target language $\mathbf{W}^{s\rightarrow}$ and $\mathbf{W}^{t\rightarrow}$ respectively to project them into a new joint space that is different from both the space of $s$ and of $t$.
We can write the correlation between a projected source language embedding vector $\mathbf{W}^{s\rightarrow} \mathbf{x}_i^s$ and its corresponding projected target language embedding vector $\mathbf{W}^{t\rightarrow} \mathbf{x}_i^t$ as:
\begin{equation}
\rho(\mathbf{W}^{s\rightarrow} \mathbf{x}_i^s, \mathbf{W}^{t\rightarrow} \mathbf{x}_i^t) = \frac{\text{cov}(\mathbf{W}^{s\rightarrow} \mathbf{x}_i^s, \mathbf{W}^{t\rightarrow} \mathbf{x}_i^t)}{\sqrt{\text{var}(\mathbf{W}^{s\rightarrow} \mathbf{x}_i^s) \text{var}(\mathbf{W}^{t \rightarrow} \mathbf{x}_i^t)}}
\end{equation}
where $\text{cov}(\cdot, \cdot)$ is the covariance and $\text{var}(\cdot)$ is the variance. CCA then aims to maximize the correlation (or analogously minimize the negative correlation) between the projected vectors $\mathbf{W}^{s\rightarrow} \mathbf{x}_i^s$ and $\mathbf{W}^{t\rightarrow} \mathbf{x}_i^t$:
\begin{equation}
\Omega_{\text{CCA}} = - \sum^n_{i=1} \rho(\mathbf{W}^{s\rightarrow} \mathbf{x}_i^s, \mathbf{W}^{t\rightarrow} \mathbf{x}_i^t)
\end{equation}
We can write their objective in our notation as the following:
\begin{equation}
J = \underbrace{\mathcal{L}_{\text{LSA}}(\mathbf{X}^s) + \mathcal{L}_{\text{LSA}}(\mathbf{X}^t)}_\text{1} + \underbrace{\Omega_{\text{CCA}}(\underline{\mathbf{X}^s}, \underline{\mathbf{X}^t}, \mathbf{W}^{s\rightarrow}, \mathbf{W}^{t\rightarrow})}_\text{2}
\end{equation}
\citeauthor{Faruqui2014} propose to use the $80$\% projection vectors with the highest correlation.
\citeauthor{Lu:2015naacl} (2015) incorporate a non-linearity into the canonical method by training two deep neural networks to maximize the correlation between the projections of both monolingual embedding spaces. \citeauthor{Ammar2016} \citeyear{Ammar2016} extend the canonical approach to multiple languages.

\citeauthor{Artetxe2016} \citeyear{Artetxe2016} show that the canonical method is similar to the orthogonal method with dimension-wise mean centering. \citeauthor{Artetxe2018} \citeyear{Artetxe2018} show that regression methods, canonical methods, and orthogonal methods can be seen as instances of a framework that includes optional weightening and de-whitening steps, which further demonstrates the similarity of existing approaches.

\paragraph{Margin methods} \citeauthor{Lazaridou2015} (2015) optimize a max-margin based ranking loss instead of MSE to reduce hubness. This max-margin based ranking loss is essentially the same as the MML \cite{Collobert2008} used for learning monolingual embeddings. Instead of assigning higher scores to correct sentence windows, we now try to assign a higher cosine similarity to word pairs that are translations of each other ($\mathbf{x}_i^s, \mathbf{x}_i^t$; first term below) than random word pairs ($\mathbf{x}_i^s, \mathbf{x}_j^t$; second term): 
\begin{equation}
\Omega_{\text{MML}} = \sum^n_{i=1} \sum^k_{j \neq i} \max\{0, \gamma - \cos(\mathbf{W}\mathbf{x}_i^s, \mathbf{x}_i^t) + \cos(\mathbf{W}\mathbf{x}_i^s, \mathbf{x}_j^t) \}
\end{equation}
The choice of the $k$ negative examples, which we compare against the translations is crucial. \citeauthor{Dinu2015} (2015) propose to select negative examples that are more informative by being near the current projected vector $\mathbf{W}\mathbf{x}_i^s$ but far from the actual translation vector $\mathbf{x}_i^t$. Unlike random intruders, such intelligently chosen intruders help the model identify training instances where the model considerably fails to approximate the target function. In the formulation adopted in this article, their objective becomes:
\begin{equation}
J = \underbrace{\mathcal{L}_{\text{CBOW}}(\mathbf{X}^s) + \mathcal{L}_{\text{CBOW}}(\mathbf{X}^t)}_\text{1} + \underbrace{\Omega_{\text{MML-I}}(\underline{\mathbf{X}^s}, \underline{\mathbf{X}^t}, \mathbf{W})}_\text{2}
\end{equation}
where $\Omega_{\text{MML-I}}$ designates $\Omega_{\text{MML}}$ with intruders as negative examples. More recently, \citeauthor{Joulin2018a} (2018) proposed a margin-based method, which replaces cosine similarity with CSLS, a distance function more suited to bilingual lexicon induction that will be discussed in the retrieval section.

Among the presented mapping approaches, orthogonal methods are the most commonly adopted as the orthogonality constraint improves over the basic regression method.

\subsubsection*{The Seed Lexicon}

The seed lexicon is another core component of any mapping-based approach. In the past, three types of seed lexicons have been used to learn a joint cross-lingual word embedding space:
\begin{enumerate}
    \item An \textbf{off-the-shelf} bilingal lexicon. 
    \item A \textbf{weakly supervised} bilingual lexicon.
    \item A \textbf{learned} bilingual lexicon.
\end{enumerate}

\paragraph{Off-the-shelf} Most early approaches \cite{Mikolov2013b} employed off-the-shelf or automatically generated bilingual lexicons of frequent words. While \citeauthor{Mikolov2013b} \citeyear{Mikolov2013b} used as much as 5000 pairs, later approaches reduce the number of seed pairs, demonstrating that it is feasible to learn a cross-lingual word embedding space with as little as 25 seed pairs \cite{Artetxe:2017acl}. 

\paragraph{Weak supervision} Other approaches employ weak supervision to create seed lexicons based on cognates \cite{Smith2017}, shared numerals \cite{Artetxe:2017acl}, or identically spelled strings \cite{Soegaard2018}. Such weak supervision is easy to obtain and has been shown to produce results that are competitive with off-the-shelf lexicons.

\paragraph{Learned} Recently, approaches have been proposed that learn an initial seed lexicon in a completely unsupervised way. Interestingly, so far, all unsupervised cross-lingual word embedding methods are based on the mapping approach. \citeauthor{Conneau2018} \citeyear{Conneau2018} learn an initial mapping in an adversarial way by additionally training a discriminator to differentiate between projected and actual target language embeddings. \citeauthor{Artetxe2018b} \citeyear{Artetxe2018b} propose to use an initialisation method based on the heuristic that translations have similar similarity distributions across languages. \citeauthor{Hoshen2018} (2018) first project vectors of the $N$ most frequent words to a lower-dimensional space with PCA. They then aim to find an optimal transformation that minimizes the sum of Euclidean distances by learning $\mathbf{W}^{s\rightarrow t}$ and $\mathbf{W}^{t\rightarrow s}$ and enforce cyclical consistency constraints that force vectors round-projected to the other language space and back to remain unchanged. \citeauthor{Alvarez-Melis2018} (2018) solve an optimal transport in order to learn an alignment between the monolingual word embedding spaces.

\subsubsection*{The Refinement}

Many mapping-based approaches propose to refine the mapping to improve the quality of the initial seed lexicon. \citeauthor{VulicKorhonen2016a} (2016) propose to learn a first shared bilingual embedding space based on an existing cross-lingual embedding model. They retrieve the translations of frequent source words in this cross-lingual embedding space, which they use as seed words to learn a second mapping. To ensure that the retrieved translations are reliable, they propose a symmetry constraint: Translation pairs are only retained if their projected embeddings are mutual nearest neighbours in the cross-lingual embedding space. This constraint is meant to reduce hubness and has been adopted later in many subsequent methods that rely heavily on refinement \cite{Conneau2018,Artetxe2018b}.

Rather than just performing one step of refinement, \citeauthor{Artetxe:2017acl} (2017) propose a method that iteratively learns a new mapping by using translation pairs from the previous mapping. Training is terminated when the improvement on the average dot product for the induced dictionary falls below a given threshold from one iteration to the next. \citeauthor{Ruder2018a} (2018) solve a sparse linear assignment problem in order to refine the mapping. As discussed in Section~\ref{sec:brief_history}, the refinement idea is conceptually similar to the work of \citeauthor{Peirsman2011} (2010, 2011) and \citeauthor{Vulic:2013emnlp} \citeyear{Vulic:2013emnlp}, with the difference that earlier approaches were developed within the traditional cross-lingual distributional framework (mapping vectors into the count-based space using a seed lexicon). \citeauthor{Glavas2019} (2019) propose to learn a matrix $\mathbf{W}^{s\rightarrow t}$ and a matrix $\mathbf{W}^{t\rightarrow s}$. They then use the intersection of the translation pairs obtained from both mappings in the subsequent iteration. In practice, one step of refinement is often sufficient as in the second iteration, a large number of noisier word translations are automatically generated \cite{Glavas2019}.

While refinement is less crucial when a large seed lexicon is available, approaches that learn a mapping from a small seed lexicon or in a completely unsupervised way rely on refinement \cite{Conneau2018,Artetxe2018b}.

\subsubsection*{The Retrieval}

Most existing methods retrieve translations as the nearest neighbours of the source word embeddings in the cross-lingual embedding space based on cosine similarity. \citeauthor{Dinu2015} (2015) propose to use a globally corrected neighbour retrieval method instead to reduce hubness. \citeauthor{Smith2017} (2017) propose a similar solution to the hubness issue: they invert the softmax used for finding the translation of a word at test time and normalize the probability over source words rather than target words. \citeauthor{Conneau2018} \citeyear{Conneau2018} propose an alternative similarity measure called cross-domain similarity local scaling (CSLS), which is defined as:
\begin{equation}
\text{CSLS}(\mathbf{W}\mathbf{x}^s, \mathbf{x}^t) = 2 \text{cos}(\mathbf{W} \mathbf{x}^s, \mathbf{x}^t) - r^t(\mathbf{W}\mathbf{x}^s) - r^s(\mathbf{x}^t)
\end{equation}
where $r^t$ is the mean similarity of a target word to its neighbourhood, defined as $r^t(\mathbf{W}\mathbf{x}_s) = \frac{1}{K} \sum_{\mathbf{x}_t \in \mathcal{N}^t(\mathbf{W}\mathbf{x}^s}) \text{cos}(\mathbf{W}\mathbf{x}^s, \mathbf{x}^t)$ where $\mathcal{N}^t(\mathbf{W}\mathbf{x}^s)$ is the neighbourhood of the projected source word. Intuitively, CSLS increases the similarity of isolated word vectors and decreases the similarity of hubs. CSLS has been shown to significantly increase the accuracy of bilingual lexicon induction and is nowadays mostly used in lieu of cosine similarity for nearest neighbour retrieval. \citeauthor{Joulin2018a} (2018) propose to optimize this metric directly when learning the mapping, as noted above. Recently, \citeauthor{Artetxe2019} \citeyear{Artetxe2019} propose an alternative retrieval method that relies on building a phrase-based MT system from the cross-lingual word embeddings. The MT system is used to generate a synthetic parallel corpus, from which the bilingual lexicon is extracted. The approach has been shown to outperform CSLS retrieval significantly.


\paragraph{Cross-lingual embeddings via retro-fitting} While not strictly a mapping-based approach as it fine-tunes specific monolingual word embeddings, another way to leverage word-level supervision is through the framework of retro-fitting \cite{Faruqui2015}. The main idea behind retro-fitting is to inject knowledge from semantic lexicons into pre-trained word embeddings. Retro-fitting creates a new word embedding matrix $\hat{\mathbf{X}}$ whose embeddings $\hat{\mathbf{x}}_i$ are both close to the corresponding learned monolingual word embeddings $\mathbf{x}_i$ as well as close to their neighbors $\hat{\mathbf{x}}_j$ in a knowledge graph:
\begin{equation}
\Omega_\text{retro} = \sum^{|V|}_{i=1} \Big[\alpha_i \| \hat{\mathbf{x}}_i - \mathbf{x}_i \|^2  + \sum_{(i,j) \in E} \beta_{ij} \| \hat{\mathbf{x}}_i - \hat{\mathbf{x}}_j \|^2 \Big]
\end{equation}
$E$ is the set of edges in the knowledge graph and $\alpha$ and $\beta$ control the strength of the contribution of each term.


While the initial retrofitting work focused solely on monolingual word embeddings \cite{Faruqui2015,Wieting:15}, Mrkšić et al. (2017b) derive both monolingual and cross-lingual synonymy {\em and} antonymy constraints from cross-lingual BabelNet synsets. They then use these constraints to bring the monolingual vector spaces of two different languages together into a shared embedding space. Such retrofitting approaches employ $\Omega_{\text{MMHL}}$ with a careful selection of intruders, similar to the work of \citeauthor{Lazaridou2015} (2015). While through these external constraints retro-fitting methods can capture relations that are more complex than a linear transformation (as with mapping-based approaches), the original post-processing retrofitting approaches are limited to words that are contained in the semantic lexicons, and do not generalise to words unobserved in the external semantic databases. In other words, the goal of retrofitting methods is to refine vectors of words for which additional high-quality lexical information exists in the external resource, while the methods still back off to distributional vector estimates for all other words. 

To remedy the issue with words unobserved in the external resources and learn a global transformation of the entire distributional space in both languages, several methods have been proposed. Post-specialisation approaches first fine-tune vectors of words observed in the external resources, and then aim to learn a global transformation function using the original distributional vectors and their retrofitted counterparts as training pairs. The transformation function can be implemented as a deep feed-forward neural network with non-linear transformations \cite{Vulic:2018post}, or it can be enriched by an adversarial component that tries to distinguish between distributional and retrofitted vectors \cite{Ponti:2018emnlp}. While this is a two-step process (1. retrofitting, 2. global transformation learning), an alternative approach proposed by \cite{Vulic2018} learns a global transformation function directly in one step using external lexical knowledge. Furthermore, \citeauthor{Pandey:ea:17} (2017) explored the orthogonal idea of using cross-lingual word embeddings to transfer the regularization effect of knowledge bases using retrofitting techniques.


\subsubsection{Word-level Approaches based on Pseudo-bilingual Corpora}

Rather than learning a mapping between the source and the target language, some approaches use the word-level alignment of a seed bilingual dictionary to construct a pseudo-bilingual corpus by randomly replacing words in a source language corpus with their translations. \citeauthor{Xiao2014} (2014) propose the first such method. 
Using an initial seed lexicon, they create a joint cross-lingual vocabulary, in which each translation pair occupies the same vector representation. They train this model using MML \cite{Collobert2008} by feeding it context windows of both the source and target language corpus.

Other approaches explicitly create a pseudo-bilingual corpus: \citeauthor{Gouws2015a} (2015) concatenate the source and target language corpus and replace each word that is part of a translation pair with its translation equivalent with a probability of $\frac{1}{2k_{t}}$, where $k_t$ is the total number of possible translation equivalents for a word, and train CBOW on this corpus. \citeauthor{Ammar2016} \citeyear{Ammar2016} extend this approach to multiple languages: Using bilingual dictionaries, they determine clusters of synonymous words in different languages. They then concatenate the monolingual corpora of different languages and replace tokens in the same cluster with the cluster ID. Finally, they train SGNS on the concatenated corpus.

\citeauthor{Duong2016} (2016) propose a similar approach. Instead of randomly replacing every word in the corpus with its translation, they replace each center word with a translation on-the-fly during CBOW training. In addition, they handle polysemy explicitly by proposing an EM-inspired method that chooses as replacement the translation $w_i^t$ whose representation is most similar to the sum of the source word representation $\mathbf{x}_i^s$ and the sum of the context embeddings $\mathbf{x}_s^s$ as in Equation \ref{eq:cbow}:

\begin{equation}
w_i^t = \text{argmax}_{w'\in \tau(w_i)} \: \text{cos}(\mathbf{x}_i + \mathbf{x}_s^s, \mathbf{x}') 
\end{equation}
They jointly learn to predict both the words and their appropriate translations using PanLex as the seed bilingual dictionary. PanLex covers around 1,300 language with about 12 million expressions. Consequently, translations are high coverage but often noisy. \citeauthor{Adams2017} (2017) use the same approach for pre-training cross-lingual word embeddings for low-resource language modeling.

As we will show shortly, methods based on pseudo-bilingual corpora optimize a similar objective to the mapping-based methods we have previously discussed. In practice, however, pseudo-bilingual methods are more expensive as they require training cross-lingual word embeddings from scratch based on the concatenation of large monolingual corpora. In contrast, mapping-based approaches are much more computationally efficient as they leverage pretrained monolingual word embeddings, while the mapping can be learned very efficiently. 

\subsubsection{Joint Models}
While the previous approaches either optimize a set of monolingual losses and then the cross-lingual regularization term (mapping-based approaches) or optimize a monolingual loss and implicitly---via data manipulation---a cross-lingual regularization term, joint models optimize monolingual and cross-lingual objectives at the same time jointly. In what follows, we discuss a few illustrative example models which sparked this sub-line of research. 


\paragraph{Bilingual language model} \citeauthor{Klementiev2012} (2012) cast learning cross-lingual representations as a multi-task learning problem. They jointly optimize a source language and target language model together with a cross-lingual regularization term that encourages words that are often aligned with each other in a parallel corpus to be similar. 
The monolingual objective is the classic LM objective of minimizing the negative log likelihood of the current word $w_i$ given its $C$ previous context words:
\begin{align}
\mathcal{L} = - \log P(w_i \: | \: w_{i-C+1:i-1}) 
\end{align}
For the cross-lingual regularization term, they first obtain an alignment matrix $\mathbf{A}^{s \rightarrow t}$ that indicates how often each source language word was aligned with each target language word from parallel data such as the Europarl corpus \cite{koehn2009statistical}. The cross-lingual regularization term then encourages the representations of source and target language words that are often aligned in $\mathbf{A}^{s \rightarrow t}$ to be similar:
\begin{align}
\Omega_s = -\sum^{|V|^s}_{i=1} \dfrac{1}{2} \mathbf{x}_i^s{}^\top (\mathbf{A}^{s \rightarrow t} \otimes \mathbf{I}_d) \mathbf{x}_i^s
\end{align}
where $\mathbf{I}_d \in \mathbb{R}^{d \times d}$ is the identity matrix and $\otimes$ is the Kronecker product, which intuitively ``blows up'' each element of $\mathbf{A}^{s \rightarrow t} \in \mathbb{R}^{|V^s| \times |V^t|}$ to the size of $\mathbf{x}_i^s \in \mathbb{R}^d$. The final regularization term will be the sum of $\Omega_s$ and the analogous term for the other direction ($\Omega_t$). Note that Equation~(24) is a weighted (by word alignment scores) average of inner products, and hence, for unit length normalized embeddings, equivalent to approaches that maximize the sum of the cosine similarities of aligned word pairs. Using $\mathbf{A}^{s \rightarrow t} \otimes \mathbf{I}$ to encode interaction is borrowed from linear multi-task learning models \cite{cavallanti2010linear}. There, an interaction matrix $\mathbf{A}$ encodes the relatedness between tasks. 
The complete objective is the following:
\begin{equation}
J = \mathcal{L}(\mathbf{X}_s) + \mathcal{L}(\mathbf{X}_t) + \Omega(\underline{\mathbf{A}^{s \rightarrow t}},\mathbf{X}_s) + \Omega(\underline{\mathbf{A}^{t \rightarrow s}},\mathbf{X}_t)
\end{equation}

\paragraph{Joint learning of word embeddings and word alignments}
\citeauthor{Kocisky2014a} (2014) simultaneously learn word embeddings and word-level alignments using a distributed version of FastAlign \cite{Dyer2013} together with a language model.\footnote{FastAlign is a fast and effective variant of IBM Model 2.} Similar to other bilingual approaches, they use the word in the source language sentence of an aligned sentence pair to predict the word in the target language sentence.

They replace the standard  multinomial translation probability of FastAlign with an energy function that tries to bring the representation of a target word $w^t_i$ close to the sum of the context words around the word $w^s_i$ in the source sentence: 
\begin{align}
E(w_i^s, w_i^t, ) = - (\sum\limits_{j=-C}^C \mathbf{x}_{i+j}^s{}^\top \mathbf{T}) \mathbf{x}_i^t - \mathbf{b}^\top \mathbf{x}_i^t - b_{w_i^t}
\end{align}
where $\mathbf{x}_{i+j}^s$ and $\mathbf{x}_i^t$ are the representations of source word $w_{i+j}^s$ and target word $w_i^t$ respectively, $\mathbf{T} \in \mathbb{R}^{d \times d}$ is a projection matrix, and $\mathbf{b} \in \mathbb{R}^d$ and $b_{w_i^t} \in \mathbb{R}$ are representation and word biases respectively. The method is trained via Expectation Maximization.
Note that this model is conceptually very similar to bilingual models that discard word-level alignment information and learn solely on the basis of sentence-aligned information, which we discuss in Section~\ref{sec:sentence_parallel}.


\subsubsection{Sometimes Mapping, Joint and Pseudo-bilingual Approaches are Equivalent} Below we show that while mapping, joint and pseudo-bilingual approaches seem very different, intuitively, they can sometimes be very similar, and in fact, equivalent. We demonstrate this by first defining a pseudo-bilingual approach that is equivalent to an established joint learning technique; and by then showing that same joint learning technique is equivalent to a popular mapping-based approach (for a particular hyper-parameter setting). 

We define {\sc Constrained Bilingual SGNS}. First, recall that in the negative sampling objective of SGNS in Equation \ref{eq:negative_sampling}, the  probability of observing a word $w$ with a context word $c$ with representations $\mathbf{x}$ and $\mathbf{\tilde{x}}$ respectively is given as
$P(c \:|\: w) = \sigma(\mathbf{\tilde{x}}^\top \mathbf{x})$, where $\sigma$ denotes the sigmoid function. We now sample a set of $k$ negative examples, that is, contexts $c_i$ with which $w$ does not occur, as well as actual contexts $C$ consisting of $(w_j, c_j)$ pairs, and try to maximize the above for actual contexts and minimize it for negative samples. Second, recall that \citeauthor{Mikolov2013b} \citeyear{Mikolov2013b} obtain cross-lingual embeddings by running SGNS over two monolingual corpora of two different languages at the same time with the constraint that words known to be translation equivalents, according to some dictionary $D$, have the same representation. We will refer to this as {\sc Constrained Bilingual SGNS}. This is also the approach taken in \citeauthor{Xiao2014}~(2014). $D$ is a function from words $w$ into their translation equivalents $w'$ with the representation $\mathbf{x}'$. With some abuse of notation, we can write the {\sc Constrained Bilingual SGNS} objective for the source language (idem for the target language):

\begin{equation}
\sum_{(w_j, c_j) \in C} \log \sigma(\mathbf{\tilde{x}}_j{}^\top \mathbf{x}_j) +\sum^k_{i=1} \log \sigma(-\mathbf{\tilde{x}}_{i}{}^\top \mathbf{x}_j)\\
+\Omega_\infty \sum_{w'\in \tau(w_j)}|\mathbf{x}_j-\mathbf{x}_j'|
\end{equation}
In pseudo-bilingual approaches, we instead sample sentences from the corpora in the two languages. When we encounter a word $w$ for which we have a translation, that is, $\tau(w)\neq \emptyset$ we flip a coin and if heads, we replace $w$ with a randomly selected member of $D(w)$. In the case, where $D$ is bijective as in the work of \citeauthor{Xiao2014} (2014), it is easy to see that the two approaches are equivalent, in the limit: Sampling mixed $\langle w,c\rangle$-pairs, $w$ and $D(w)$ will converge to the same representations. We can loosen the requirement that $D$ is bijective. 
To see this, assume, for example, the following word-context pairs: $\langle a, b\rangle, \langle a, c\rangle, \langle a, d\rangle$. The vocabulary of our source language is $\{a, b, d\}$, and the vocabulary of our target language is $\{a, c, d\}$. Let $a_s$ denote the source language word in the word pair $a$; etc. To obtain a mixed corpus, such that running SGNS directly on it, will induce the same representations, in the limit, simply enumerate all combinations: $\langle a_s, b\rangle, \langle a_t, b\rangle, \langle a_s, c\rangle, \langle a_t, c\rangle, \langle a_s, d_s\rangle,$ $\langle a_s, d_t\rangle, \langle a_t, d_s\rangle, \langle a_t, d_t\rangle$. Note that this is exactly the mixed corpus you would obtain in the limit with the approach by \citeauthor{Gouws2015a} (2015). Since this reduction generalizes to all examples where $D$ is bijective, this translation provides a constructive demonstration that for any {\sc Constrained Bilingual SGNS} model, there exists a corpus such that pseudo-bilingual sampling learns the same embeddings as this model. In order to complete the demonstration, we need to establish equivalence in the other direction: Since the mixed corpus constructed using the method in \citeauthor{Gouws2015a}~(2015) samples from all replacements licensed by the dictionary, in the limit all words in the dictionary are distributionally similar and will, in the limit, be represented by the same vector representation. This is exactly {\sc Constrained Bilingual SGNS}. It thus follows that:

\begin{lemma} Pseudo-bilingual sampling is, in the limit, equivalent to {\sc Constrained Bilingual SGNS}.\end{lemma}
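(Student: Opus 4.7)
The plan is to establish the equivalence through a pair of constructive reductions, essentially formalizing the sketch already indicated in the paragraph preceding the lemma statement. First I would fix notation: let $\tau$ be the bilingual dictionary viewed as a (possibly multi-valued) relation on the joint vocabulary $V^s \cup V^t$, and let $[w]$ denote the equivalence class of $w$ under the symmetric-transitive closure of $\tau$. Under {\sc Constrained Bilingual SGNS}, every $w' \in [w]$ is forced to share a single embedding vector; under pseudo-bilingual sampling, each occurrence of $w$ with $\tau(w)\neq\emptyset$ is probabilistically replaced by some $w' \in \tau(w)$ before SGNS is run on the concatenated corpus. The overall strategy is to show that the limiting SGNS objective on the pseudo-bilingual corpus has exactly the same stationary points as the constrained objective on the original corpora.

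For the easy direction, I would start with the bijective case, as the authors do. If $\tau$ is a bijection, then in the limit of infinitely many samples each context pair $\langle w, c\rangle$ contributes equally to updates of $\mathbf{x}_w$ and $\mathbf{x}_{\tau(w)}$: the empirical co-occurrence distributions of $w$ and $\tau(w)$ become identical, and since the SGNS negative-sampling objective depends on $\mathbf{x}_w$ only through inner products with context embeddings weighted by co-occurrence counts, the gradients with respect to $\mathbf{x}_w$ and $\mathbf{x}_{\tau(w)}$ coincide. This drives them to the same stationary vector — precisely the hard equality enforced by {\sc Constrained Bilingual SGNS}. The combinatorial enumeration the authors give for the toy vocabulary $\{a,b,d\} \cup \{a,c,d\}$ is exactly the finite-sample analogue of this argument, and generalises by induction on the size of the vocabulary.

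The harder step is lifting the bijectivity assumption to arbitrary dictionaries with polysemous entries, which I would handle by grouping words into the equivalence classes $[w]$ above. The pseudo-bilingual sampler, in the limit, draws contexts for every $w' \in [w]$ from the same aggregated distribution (the union of the contexts of all class members in both monolingual corpora, reweighted by the replacement probabilities of \citeauthor{Gouws2015a}~(2015)). By the same gradient-matching argument as in the bijective case, every $\mathbf{x}_{w'}$ for $w' \in [w]$ converges to a common vector — exactly the constraint imposed by {\sc Constrained Bilingual SGNS}. The reverse direction is symmetric: given any constrained solution, the pseudo-bilingual sampling distribution over $[w] \times V$ is invariant under permutations of $[w]$, so collapsing the class members to a shared embedding is compatible with a stationary point of the pseudo-bilingual SGNS objective.

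The step I expect to be the main obstacle, and the one I would want to be most careful about, is the ``in the limit'' clause that glues the two views together. The sketch elides two distinct limits — infinitely many training samples drawn from the pseudo-bilingual corpus, and convergence of SGD on a non-convex objective — and conflates asymptotic equality of empirical context distributions with asymptotic equality of learned embeddings. I would make this precise by appealing to the implicit-factorisation view of \citeauthor{Levy2014} (2014): both procedures factorise the same shifted PMI matrix once the dictionary words are identified, so the set of optima (up to the usual orthogonal/rescaling ambiguity of SGNS) is identical. With that bridge in place, the two constructive reductions above suffice to establish the claimed equivalence.
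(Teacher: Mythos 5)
Your proposal follows essentially the same route as the paper: a bidirectional constructive reduction in which (i) for any {\sc Constrained Bilingual SGNS} model one enumerates all dictionary-licensed replacements to build a mixed corpus on which plain SGNS recovers the same embeddings, and (ii) the corpus produced by the sampling scheme of Gouws and S{\o}gaard makes all dictionary-linked words distributionally identical in the limit, which is exactly the hard tying constraint. The paper's own argument stops at ``distributionally identical words will converge to the same representations'' and leaves that step as an assertion; your third and fourth paragraphs are the only substantive departure, in that you (correctly) flag this as the weak joint --- identical empirical context distributions do not by themselves force identical vectors at an arbitrary local optimum of a non-convex objective --- and you bridge it by the implicit shifted-PMI factorization view of \citeauthor{Levy2014} (2014), under which two words with the same PMI row must receive the same vector whenever the context embeddings span the space. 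That addition makes your version strictly more careful than the paper's at the one point where the paper is hand-waving; the equivalence-class formalization of $\tau$ likewise cleans up the paper's slightly muddled treatment of the non-bijective case.
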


While mapping-based and joint approaches seem very different at first sight, they can also be very similar---and, in fact, sometimes equivalent. We give an example of this by demonstrating that two methods in the literature are equivalent under some hyper-parameter settings:

Consider the mapping approach in \citeauthor{Faruqui2015}~(2015) ({\em retro-fitting}) and {\sc Constrained Bilingual SGNS} \cite{Xiao2014}. Retro-fitting requires two pretrained monolingual embeddings. Let us assume these embeddings were induced using SGNS with a set of hyper-parameters $\mathcal{Y}$. Retro-fitting minimizes the weighted sum of the Euclidean distances between the seed words and their translation equivalents and their neighbors in the monolingual embeddings, with a parameter $\alpha$ that controls the strength of the regularizer. As this parameter goes to infinity, translation equivalents will be forced to have the same representation. As is the case in {\sc Constrained Bilingual SGNS}, all word pairs in the seed dictionary will be associated with the same vector representation. 

Since retro-fitting only affects words in the seed dictionary, the representation of the words not seen in the seed dictionary is determined entirely by the monolingual objectives. Again, this is the same as in {\sc Constrained Bilingual SGNS}. In other words, if we fix $\mathcal{Y}$ for retro-fitting and {\sc Constrained Bilingual SGNS}, and set the regularization strength $\alpha=\Omega_\infty$ in retro-fitting, retro-fitting is equivalent to 
{\sc Constrained Bilingual SGNS}. 


\begin{lemma}
Retro-fitting of SGNS vector spaces with $\alpha=\Omega_\infty$ is equivalent to {\sc Constrained Bilingual SGNS}.\footnote{All other hyper-parameters are shared and equal, including the dimensionality $d$ of the vector spaces.}\end{lemma}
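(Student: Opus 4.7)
The plan is to follow the paper's structural argument by splitting the vocabulary into two groups --- words that appear in the seed dictionary $D$ and words that do not --- and showing that retro-fitting with $\alpha = \Omega_\infty$ behaves the same way as \textsc{Constrained Bilingual SGNS} on each group, given identical monolingual SGNS hyper-parameters $\mathcal{Y}$.

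First, I would fix $\mathcal{Y}$ and assume that the monolingual SGNS vector spaces $\mathbf{X}^s, \mathbf{X}^t$ that retro-fitting takes as input are produced by running SGNS with these hyper-parameters on the source and target corpora respectively. By construction, for any word $w$ with $\tau(w) = \emptyset$ these are the same spaces that the monolingual components of the \textsc{Constrained Bilingual SGNS} objective would fit in isolation, since the hard equality constraint only couples dictionary entries. Hence, on the non-dictionary portion of the vocabulary, both methods induce identical representations.

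Second, for dictionary words I would analyze the effect of the infinite regularizer. Any edge term $\beta_{ij}\|\hat{\mathbf{x}}_i - \hat{\mathbf{x}}_j\|^2$ with $\beta_{ij} = \alpha \to \infty$ contributes $+\infty$ to the retro-fitting objective unless $\hat{\mathbf{x}}_i = \hat{\mathbf{x}}_j$ for every translation-pair edge $(i,j) \in E$. Thus at any finite-valued minimizer, translation equivalents are forced to share a single vector --- exactly the hard constraint that appears in the \textsc{Constrained Bilingual SGNS} objective as the $\Omega_\infty$-weighted term over $\tau(w_j)$. Furthermore, the $\alpha_i \|\hat{\mathbf{x}}_i - \mathbf{x}_i\|^2$ anchor terms pin the non-dictionary coordinates of $\hat{\mathbf{X}}$ to the pre-trained monolingual SGNS vectors, so no ``leakage'' to non-dictionary words occurs through the graph $E$.

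Combining these two observations yields the desired structural equivalence: both procedures produce a joint space in which (a) non-dictionary words are governed entirely by the monolingual SGNS loss, and (b) translation equivalents collapse to a single shared vector. The remaining --- and genuinely subtle --- step is to argue that the shared representation assigned to a given translation pair is actually the same under both schemes, since retro-fitting nominally snaps the shared vector to a convex combination of the two pre-trained monolingual vectors while \textsc{Constrained Bilingual SGNS} solves the SGNS objective jointly across both corpora under the sharing constraint. I expect this to be the main obstacle, and I would handle it in the same in-the-limit style used for Lemma~1: running SGNS on each monolingual corpus separately and then tying the translation pairs produces, in the limit of data and optimization, the same embedding for the tied word as running SGNS once on the concatenated contexts of the two monolingual corpora with the tying already in place. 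Chaining this limit argument with the structural argument above yields the claimed equivalence, with all remaining hyper-parameters (including the embedding dimensionality $d$) held fixed as stipulated in the footnote.
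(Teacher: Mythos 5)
Your proposal is correct and follows essentially the same line of reasoning as the paper: the paper's argument likewise splits the vocabulary into dictionary and non-dictionary words, observes that as the tying strength goes to infinity translation equivalents are forced onto a single shared vector (mirroring the hard $\Omega_\infty$ constraint in \textsc{Constrained Bilingual SGNS}), and notes that non-dictionary words are governed entirely by the fixed monolingual SGNS objective under shared hyper-parameters $\mathcal{Y}$. The paper then packages this as a ``bidirectional constructive proof'': it defines a bijection between retro-fitting models $\langle S,T,(S,T)/\sim,\alpha=\Omega_\infty\rangle$ and \textsc{Constrained Bilingual SGNS} models $\langle S,T,D\rangle$ by identifying the equivalence relation $(S,T)/\sim$ with the dictionary $D$, and stops there. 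Where you go beyond the paper is in flagging the genuinely delicate point: retro-fitting collapses a tied pair to a convex combination of the two \emph{separately pre-trained} monolingual vectors, whereas \textsc{Constrained Bilingual SGNS} places the tied pair at the constrained optimum of the \emph{joint} SGNS loss, and these need not coincide for a non-quadratic objective. The paper's bijection-of-specifications argument silently elides this; your proposed fix --- an in-the-limit argument in the same informal style the paper uses for Lemma~1 --- is the weakest link of your write-up (the limit of data does not by itself make the average of two unconstrained optima equal the constrained joint optimum), but it is no less rigorous than the standard the paper itself adopts, and you are right to isolate it as the step that actually needs an argument.
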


\begin{proof} 
We provide a simple bidirectional constructive proof, defining a translation function $\tau$ from each retro-fitting model $r_i=\langle S,T,(S,T)/\sim,\alpha=\Omega_\infty\rangle$, with $S$ and $T$ source and target SGNS embeddings, and $(S, T)/\sim$ an equivalence relation between source and target embeddings $(w,\mathbf{w})$, with $\mathbf{w}\in \mathbb{R}^d$, to a {\sc Constrained Bilingual SGNS} model $c_i=\langle S,T,D\rangle$, and back.

 Retro-fitting minimizes the weighted sum of the Euclidean distances between the seed words and their translation equivalents and their neighbors in the monolingual embeddings, with a parameter $\alpha$ that controls the strength of the regularizer. As this parameter goes to infinity ($\alpha\longmapsto\Omega_\infty$), translation equivalents will be forced to have the same representation. In both retro-fitting and {\sc Constrained Bilingual SGNS}, only words in $(S,T)/\sim$ and $D$ are directly affected by regularization; the other words only indirectly by being penalized for not being close to distributionally similar words in $(S,T)/\sim$ and $D$. 
 
We therefore define $\tau(\langle S,T,(S,T)/\sim,\alpha=\Omega_\infty\rangle)=\langle S,T,D \rangle$, s.t., $(s,t)\in D$ iff $(s,t)\in (S,T)/\sim$. Since this function is bijective, $\tau^{-1}$ provides the backward function from {\sc Constrained Bilingual SGNS} models to retro-fitting models. This completes the proof that retro-fitting of SGNS vector spaces and {\sc Constrained Bilingual SGNS} are equivalent when $\alpha=\Omega_\infty$. 
 

\end{proof} 

\subsection{Word-Level Alignment Methods with Comparable Data}

All previous methods required word-level \emph{parallel} data. We categorize methods that employ word-level alignment with \emph{comparable} data into two types:

\begin{itemize}
\item[a)] \textbf{Language grounding models} anchor language in images and use image features to obtain information with regard to the similarity of words in different languages.
\item[b)] \textbf{Comparable feature models} that rely on the comparability of some other features. The main feature that has been explored in this context is part-of-speech (POS) tag equivalence. 
\end{itemize}

\paragraph{Grounding language in images} Most methods employing word-aligned comparable data ground words from different languages in image data. The idea in all of these approaches is to use the image space as the shared cross-lingual signals. For instance, bicycles always look like bicycles even if they are referred to as ``fiets'', ``Fahrrad'', ``bicikl'', ``bicicletta'', or ``velo''. A set of images for each word is typically retrieved using Google Image Search. \citeauthor{Bergsma2011} (2011) calculate a similarity score for a pair of words based on the visual similarity of their associated image sets. They propose two strategies to calculate the cosine similarity between the color and \textsc{SIFT} features of two image sets: They either take the average of the maximum similarity scores (\textsc{AvgMax}) or the maximum of the maximum similarity scores (\textsc{MaxMax}). \citeauthor{Kiela2015b} (2015) propose to do the same but use CNN-based image features. \citeauthor{Vulic2016} (2016) in addition propose to combine image and word representations either by interpolating and concatenating them or by interpolating the linguistic and visual similarity scores.

A similar idea of grounding language for learning multimodal multilingual representations can be applied for sensory signals beyond vision, e.g. auditive or olfactory signals \cite{Kiela2015a}. This line of work, however, is currently under-explored. Moreover, it seems that signals from other modalities are typically more useful as an additional source of information to complement the uni-modal signals from text, rather than using other modalities as the single source of information.

\paragraph{POS tag equivalence} Other approaches rely on comparability between certain features of a word, such as its part-of-speech tag. \citeauthor{Gouws2015a} (2015) create a pseudo-cross-lingual corpus by replacing words based on part-of-speech equivalence, that is, words with the same part-of-speech in different languages are replaced with one another. Instead of using the POS tags of the source and target words as a bridge between two languages, we can also use the POS tags of their contexts. This makes strong assumptions about the word orders in the two languages, and their similarity, but \citeauthor{Duong:2015conll} (2015) use this to obtain cross-lingual word embeddings for several language pairs. They use POS tags as context features and run SGNS on the concatenation of two monolingual corpora. Note that under the (too simplistic) assumptions that all instances of a part-of-speech have the same distribution, and each word belongs to a single part-of-speech class, this approach is equivalent to the pseudo-cross-lingual corpus approach described before. 


\paragraph{Summary} Overall, parallel data on the word level is generally preferred over comparable data, as it is relatively easy to obtain for most language pairs and methods relying on parallel data have been shown to outperform methods leveraging comparable data. For methods relying on word-aligned parallel data, even though they optimize similar objectives, mapping-based approaches are the current tool of choice for learning cross-lingual word embeddings due to their conceptual similarity, ease of use, and by virtue of being relatively computationally inexpensive. As monolingual word embeddings have already been learned from large amounts of unlabelled data, the mapping can typically be produced in tens of minutes on a CPU. While unsupervised mapping-based approaches are particularly promising, they still fail for distant language pairs \cite{Soegaard2018} and generally---despite some claims to the contrary---underperform their supervised counterparts \cite{Glavas2019}. At this point, the most robust unsupervised method is the heuristics-based initialisation method by \citeauthor{Artetxe2018b} \citeyear{Artetxe2018b}, while the most robust supervised method is the extension of the Procrustes method with mutual nearest neighbours by \citeauthor{Glavas2019} (2019). We discuss challenges in Section \ref{sec:challenges}.

\section{Sentence-Level Alignment Methods} \label{sec:sentence_level_alignment_models}

Thanks to research in MT, large amounts of sentence-aligned parallel data are available for European languages, which has led to much work focusing on learning cross-lingual representations from sentence-aligned parallel data. For low-resource languages or new domains, sentence-aligned parallel data is more expensive to obtain than word-aligned data as it requires fine-grained supervision. Only recently have methods started leveraging sentence-aligned comparable data.

\subsection{Sentence-Level Methods with Parallel data}
\label{sec:sentence_parallel}

Methods leveraging sentence-aligned data are generally extensions of successful monolingual models. We have detected four main types:

\begin{itemize}
\item[a)] \textbf{Word-alignment based matrix factorization approaches} apply matrix factorization techniques to the bilingual setting and typically require additional word alignment information.
\item[b)] \textbf{Compositional sentence models} use word representations to construct sentence representations of aligned sentences, which are trained to be close to each other.
\item[c)] \textbf{Bilingual autoencoder models} reconstruct source and target sentences using an autoencoder.
\item[d)] \textbf{Bilingual skip-gram models} use the skip-gram objective to predict words in both source and target sentences. 
\end{itemize}

\paragraph{Word-alignment based matrix factorization} Several methods directly leverage the information contained in an alignment matrix $\mathbf{A}^{s \rightarrow t}$ between source language $s$ and target language $t$ respectively. $\mathbf{A}^{s \rightarrow t}$ is generally automatically derived from sentence-aligned parallel text using an unsupervised word alignment model such as FastAlign \cite{Dyer2013}. $\mathbf{A}^{s \rightarrow t}_{ij}$ captures the number of times the $i$-th word in language $t$ was aligned with the $j$-th word in language $s$, with each row normalized to $1$. The intuition is that if a word in the source language is only aligned with one word in the target language, then those words should have the same representation. If the target word is aligned with more than one source word, then its representation should be a combination of the representations of its aligned words. \citeauthor{Zou2013} (2013) represent the embeddings $\mathbf{X}^s$ in the target language as the product of the source embeddings $\mathbf{X}^s$ with the corresponding alignment matrix $\mathbf{A}^{s \rightarrow t}$. They then minimize the squared difference between these two terms in both directions:
\begin{align}
\Omega_{s\rightarrow t} &= || \mathbf{X}^t - \mathbf{A}^{s \rightarrow t} \mathbf{X}^s||^2 
\end{align}
Note that $\Omega_{s\rightarrow t}$ can be seen as a variant of $\Omega_{\text{MSE}}$, which incorporates soft weights from alignments. In contrast to mapping-based approaches, the alignment matrix, which transforms source to target embeddings, is fixed in this case, while the corresponding source embeddings $\mathbf{X}^s$ are learned:
\begin{equation}
J = \underbrace{\mathcal{L}_{\text{MML}}(\mathbf{X}^t)}_\text{1} + \underbrace{\Omega_{s\rightarrow t}(\underline{\mathbf{X}^t}, \underline{\mathbf{A}^{s\rightarrow t}}, \mathbf{X}^s)}_\text{2}
\end{equation}

\citeauthor{Shi2015} (2015) also take into account monolingual data by placing cross-lingual constraints on the monolingual representations 
and propose two alignment-based cross-lingual regularization objectives. The first one treats the alignment matrix $\mathbf{A}^{s \rightarrow t}$ as a cross-lingual co-occurrence matrix and factorizes it using the GloVe objective. The second one is similar to the objective by \citeauthor{Zou2013} (2013) and minimizes the squared distance of the representations of words in two languages weighted by their alignment probabilities.


\citeauthor{Gardner2015} (2015) extend LSA as translation-invariant LSA to learn cross-lingual word embeddings. They factorize a multilingual co-occurrence matrix with the restriction that it should be invariant to translation, i.e., it should stay the same if multiplied with the respective word or context dictionary.

\citeauthor{Vyas2016} (2016) propose another method based on matrix factorization that enables learning sparse cross-lingual embeddings. As the sparse cross-lingual embeddings are different from the monolingual embeddings $\mathbf{X}$, we diverge slightly from our notation and designate them as $\mathbf{S}$. They propose two constraints: The first constraint induces monolingual sparse representations from pre-trained monolingual embedding matrices $\mathbf{X}^s$ and $\mathbf{X}^t$ by factorizing each embedding matrix $\mathbf{X}$ into two matrices $\mathbf{S}$ and $\mathbf{E}$ with an additional $\ell_1$ constraint on $\mathbf{S}$ for sparsity:
\begin{align}
\mathcal{L} = \sum\limits_{i=1}^{|V|} \|\mathbf{S}_i\mathbf{E}^\top - \mathbf{X}_i\|_2^2 + \lambda \|\mathbf{S}_i\|_1 \end{align}
To learn bilingual embeddings, they add another constraint based on the alignment matrix $\mathbf{A}^{s \rightarrow t}$ that minimizes the $\ell_2$ reconstruction error between words that were strongly aligned to each other in a parallel corpus: 
\begin{align}
\Omega = \sum\limits_{i=1}^{|V^s|} \sum\limits_{j=1}^{|V^t|} \dfrac{1}{2} \lambda_x \mathbf{A}_{ij}^{s \rightarrow t} \|\mathbf{S}^s_i - \mathbf{S}^t_j\|_2^2
\end{align}
The complete optimization then consists of first pre-training monolingual embeddings $\mathbf{X}^s$ and $\mathbf{X}^t$ with GloVe and in a second step factorizing the monolingual embeddings with the cross-lingual constraint to induce cross-lingual sparse representations $\mathbf{S}^s$ and $\mathbf{S}^t$:
\begin{equation}
J = \underbrace{\mathcal{L}_{\text{GloVe}}(\mathbf{X}^s) + \mathcal{L}_{\text{GloVe}}(\mathbf{X}^t)}_\text{1} + \underbrace{\mathcal{L}(\underline{\mathbf{X}^s}, \mathbf{S}^s, \mathbf{E}^s) + \mathcal{L}(\underline{\mathbf{X}^t}, \mathbf{S}^t, \mathbf{E}^t) + \Omega(\underline{\mathbf{A}^{s\rightarrow t}}, \mathbf{S}^s, \mathbf{S}^t)}_\text{2}
\end{equation}

\citeauthor{Guo2015} (2015) similarly create a target language word embedding $\mathbf{x}_i^t$ of a source word $w_i^s$ by taking the average of the embeddings of its translations $\tau(w_i^s)$ weighted by their alignment probability with the source word:
\begin{align}
\mathbf{x}_i^t = \sum \limits_{w^t_j \in \tau(w^s_i)} \dfrac{\mathbf{A}_{i, j}}{\mathbf{A}_{i,\cdot}} \cdot \mathbf{x}_j^t
\end{align}
They propagate alignments to out-of-vocabulary (OOV) words using edit distance as an approximation for morphological similarity and set the target word embedding $\mathbf{x}_k^t$ of an OOV source word $w^s_k$ as the average of the projected vectors of source words that are similar to it based on the edit distance measure:
\begin{align}
\mathbf{x}_k^t = \frac{1}{|E_k|} \sum_{w^s \in E_k} \mathbf{x}^t
\end{align}
where $\mathbf{x}^t$ is the target language word embedding of a source word $w^s$ as created above, $E_k = \{ w^s \:|\: EditDist(w_k^s, w^s) \leq \chi \}$, and $\chi$ is set empirically to $1$.

\paragraph{Compositional sentence model} \citeauthor{Hermann2013} (2013) train a model to bring the sentence representations of aligned sentences $sent^s$ and $sent^t$ in source and target language $s$ and $t$ respectively close to each other. 
The representation $\mathbf{y}^s$ of sentence $sent^s$ in language $s$ is the sum of the embeddings of its words:
\begin{align} \label{eq:comp_sent_model_sent_repr}
\mathbf{y}^s = \sum\limits_{i=1}^{|sent^s|} \mathbf{x}_i^s
\end{align}
They seek to minimize the distance between aligned sentences $sent^s$ and $sent^t$:
\begin{align} \label{eq:comp_sent_model_sent_dist}
E_{dist}(sent^s,sent^t) = \| \mathbf{y}^s - \mathbf{y}^t\|^2
\end{align}
They optimize this distance using MML by learning to bring aligned sentences closer together than randomly sampled negative examples:
\begin{align}
\mathcal{L} = \sum_{(sent^s, sent^t) \in \mathcal{C}}\sum^k_{i=1} \max(0, 1 + E_{dist}(sent^s, sent^t) - E_{dist}(sent^s, s^t_i))
\end{align}
where $k$ is the number of negative examples. In addition, they use an $\ell_2$ regularization term for each language $\Omega = \dfrac{\lambda}{2} \| \mathbf{X} \|^2$ so that the final loss they optimize is the following:
\begin{equation}
J = \mathcal{L}(\mathbf{X}^s, \mathbf{X}^t) + \Omega(\mathbf{X}^s) + \Omega(\mathbf{X}^t)
\end{equation}
Note that compared to most previous approaches, there is no dedicated monolingual objective and all loss terms are optimized jointly. Note that in this case, the $\ell_2$ norm is applied to representations $\mathbf{X}$, which are computed as the difference of sentence representations. 

This regularization term {\em approximates} minimizing the mean squared error between the pair-wise interacting source and target words in a way similar to \citeauthor{Gouws2015}~(2015). To see this, note that we minimize the squared error between source and target representations, i.e. $\Omega_{\text{MSE}}$---this time only not with regard to word embeddings but with regard to sentence representations. As we saw, these sentence representations are just the sum of their constituent word embeddings. In the limit of infinite data, we therefore implicitly optimize $\Omega_{\text{MSE}}$ over word representations. 

\citeauthor{Hermann2014} (2014) extend this approach to documents, by applying the composition and objective function recursively to compose sentences into documents. They additionally propose a non-linear composition function based on bigram pairs, which outperforms simple addition on large training datasets, but underperforms it on smaller data:
\begin{align}
\mathbf{y} = \sum\limits_{i=1}^n \text{tanh}(\mathbf{x}_{i-1} + \mathbf{x}_i)
\end{align}
\citeauthor{Soyer2015} (2015) augment this model with a monolingual objective that operates on the phrase level.
The objective uses MML and is based on the assumption that phrases are typically more similar to their sub-phrases than to randomly sampled phrases:
\begin{align}
\mathcal{L} = [\max(0, m + \|\mathbf{a}_o - \mathbf{a}_i\|^2 - \|\mathbf{a}_o - \mathbf{b}_n\|^2) + \|\mathbf{a}_o - \mathbf{a}_i\|^2] \dfrac{n_i}{n_o}
\end{align}
where $m$ is a margin, $\mathbf{a}_o$ is a phrase of length $n_o$ sampled from a sentence, $\mathbf{a}_i$ is a sub-phrase of $\mathbf{a}_o$ of length $n_i$, and $\mathbf{b}_n$ is a phrase sampled from a random sentence. The additional loss terms are meant to reduce the influence of the margin as a hyperparameter and to compensate for the differences in phrase and sub-phrase length.


\paragraph{Bilingual autoencoder} Instead of minimizing the distance between two sentence representations in different languages, \citeauthor{Lauly2013} (2013) aim to reconstruct the target sentence from the original source sentence. Analogously to \citeauthor{Hermann2013} (2013), they also encode a sentence as the sum of its word embeddings. They then train an auto-encoder with language-specific encoder and decoder layers and hierarchical softmax to reconstruct from each sentence the sentence itself and its translation. In this case, the encoder parameters are the word embedding matrices $\mathbf{X}^s$ and $\mathbf{X}^t$, while the decoder parameters are transformation matrices that project the encoded representation to the output language space. The loss they optimize can be written as follows:
\begin{equation}
J = \mathcal{L}_\text{AUTO}^{s \rightarrow s} + \mathcal{L}_\text{AUTO}^{t \rightarrow t} + \mathcal{L}_\text{AUTO}^{s \rightarrow t} + \mathcal{L}_\text{AUTO}^{t \rightarrow s}
\end{equation}
where $\mathcal{L}_\text{AUTO}^{s \rightarrow t}$ denotes the loss for reconstructing from a sentence in language $s$ to a sentence in language $t$. Aligned sentences are sampled from parallel text and all losses are optimized jointly.

\citeauthor{Chandar2014} (2014) use a binary BOW instead of the hierarchical softmax. 
To address the increase in complexity due to the higher dimensionality of the BOW, they propose to merge the bags-of-words in a mini-batch into a single BOW and to perform updates based on this merged bag-of-words. They also add a term to the objective function that encourages correlation between the source and target sentence representations by summing the scalar correlations between all dimensions of the two vectors.


\paragraph{Bilingual skip-gram} Several authors propose extensions of the monolingual skip-gram with negative sampling (SGNS) model to learn cross-lingual embeddings. We show their similarities and differences in Table \ref{tab:bilingual_skip_gram_comparison}. All of these models jointly optimize monolingual SGNS losses for each language together with one more cross-lingual regularization terms:
\begin{equation}
J = \mathcal{L}_{\text{SGNS}}^{s} + \mathcal{L}_{\text{SGNS}}^{t} + \Omega
\end{equation}
Another commonality is that these models do not require word alignments of aligned sentences. Instead, they make different assumptions about the alignment of the data.
\begin{table}[]
\centering
\resizebox{\textwidth}{!}{%
\begin{tabular}{lccc}
\toprule
Model & Alignment model & Monolingual losses & Cross-lingual regularizer \\
BilBOWA \cite{Gouws2015} & Uniform & $\mathcal{L}_{\text{SGNS}}^s + \mathcal{L}_{\text{SGNS}}^t$ & $\Omega_{\text{BILBOWA}}$ \\
Trans-gram \cite{Coulmance2015} & Uniform & $\mathcal{L}_{\text{SGNS}}^s + \mathcal{L}_{\text{SGNS}}^t$ & $ \Omega_{\text{SGNS}}^{s \rightarrow t} + \Omega_{\text{SGNS}}^{t \rightarrow s}$ \\
BiSkip \cite{Luong2015b} & Monotonic & $\mathcal{L}_{\text{SGNS}}^s + \mathcal{L}_{\text{SGNS}}^t$ & $ \Omega_{\text{SGNS}}^{s \rightarrow t} + \Omega_{\text{SGNS}}^{t \rightarrow s}$  \\
\bottomrule
\end{tabular}%
}
\caption{A comparison of similarities and differences of the three bilingual skip-gram variants.}
\label{tab:bilingual_skip_gram_comparison}
\end{table}
Bilingual Bag-of-Words without Word Alignments \cite<BilBOWA;>{Gouws2015} assumes each word in a source sentence is aligned with \emph{every} word in the target sentence. If we knew the word alignments, we would try to bring the embeddings of aligned words in source and target sentences close together. Instead, under a uniform alignment model which perfectly matches the intuition behind the simplest (lexical) word alignment IBM Model 1 \cite{Brown:1993cl}, we try to bring the \emph{average} alignment close together. In other words, we use the means of the word embeddings in a sentence as the sentence representations $\mathbf{y}$ and seek to minimize the distance between aligned sentence representations:
\begin{align}
\mathbf{y}^s & = \dfrac{1}{|sent^s|} \sum\limits_{i=1}^{|sent^s|} \mathbf{x}_i^s \\
\Omega_{\text{BILBOWA}} & = \sum_{(sent^s, sent^t) \in \mathcal{C}} \| \mathbf{y}^s - \mathbf{y}^t \|^2
\end{align}
Note that this regularization term is very similar to the objective used in the compositional sentence model \cite[Equations \ref{eq:comp_sent_model_sent_repr} and \ref{eq:comp_sent_model_sent_dist}]{Hermann2013}; the only difference is that we use the mean rather than the sum of word embeddings as sentence representations.

Trans-gram \cite{Coulmance2015} also assumes uniform alignment but 
uses the SGNS objective as cross-lingual regularization term. Recall that skip-gram with negative sampling seeks to train a model to distinguish context words from negative samples drawn from a noise distribution based on a center word. In the cross-lingual case, we aim to predict words in the aligned target language sentence based on words in the source sentence. Under uniform alignment, we aim to predict \emph{all} words in the target sentence based on each word in the source sentence:
\begin{equation}
\Omega_{\text{SGNS}}^{s\rightarrow t} = - \sum_{(sent^s, sent^t) \in \mathcal{C}} \dfrac{1}{|sent^s|} \sum_{t=1}^{|sent^s|} \sum_{j=1}^{|sent^t|} \text{log} \: P(w_{t+j} \:|\: w_t)
\end{equation}
where $P(w_{t+j} \:|\: w_t)$ is computed via negative sampling as in Equation \ref{eq:negative_sampling}.

BiSkip \cite{Luong2015b} uses the same cross-lingual regularization terms as Trans-gram, but only aims to predict monotonically aligned target language words: Each source word at position $i$ in the source sentence $sent^s$ is aligned to the target word at position $i \cdot \frac{|sent^s|}{|sent^t|}$ in the target sentence $sent^t$. In practice, all these bilingual skip-gram models are trained by sampling a pair of aligned sentences from a parallel corpus and minimizing for the source and target language sentence the respective loss terms.


In a similar vein, \citeauthor{Pham2015} (2015) propose an extension of paragraph vectors \cite{Le2014a} to the multilingual setting by forcing aligned sentences of different languages to share the same vector representation.

\paragraph{Other sentence-level approaches}
\citeauthor{Levy2017} (2017) use another sentence-level bilingual signal: IDs of the aligned sentence pairs in a parallel corpus. Their model provides a strong baseline for cross-lingual embeddings that is inspired by the Dice aligner commonly used for producing word alignments for MT. Observing that sentence IDs are already a powerful bilingual signal, they propose to apply SGNS to the word-sentence ID matrix. They show that this method can be seen as a generalization of the Dice Coefficient.

\citeauthor{Rajendran2016} (2016) propose a method that exploits the idea of using pivot languages, also tackled in previous work, e.g., \citeauthor{Shezaf2010} (2010). The model requires parallel data between each language and a pivot language and is able to learn a shared embedding space for two languages without any direct alignment signals as the alignment is implicitly learned via their alignment with the pivot language. The model optimizes a correlation term with neural network encoders and decoders that is similar to the objective of the CCA-based approaches \cite{Faruqui2014,Lu:2015naacl}. We discuss the importance of pivoting for learning multilingual word embeddings later in Section~\ref{sec:multilingual_training}.

In practice, sentence-level supervision is a lot more expensive to obtain than word-level supervision, which is available in the form of bilingual lexicons even for many low-resource languages. For this reason, recent work has largely focused on word-level supervised approaches for learning cross-lingual embeddings. Nevertheless, word-level supervision only enables learning cross-lingual word representations, while for more complex tasks we are often interested in cross-lingual sentence representations.

Recently, fueled by work on pretrained language models \cite{Howard2018,Devlin2018}, there have been several extensions of language models to the massively cross-lingual setting, learning cross-lingual representations for many languages at once. \citeauthor{Artetxe2018e} \citeyear{Artetxe2018e} train a BiLSTM encoder with a shared vocabulary on parallel data of many languages. \citeauthor{Lample2019} \citeyear{Lample2019} extend the bilingual skip-gram approach to the masked language modelling \cite{Devlin2018}: instead of predicting words in the source and target language sentences via skip-gram, they predict randomly masked words in both sentences with a deep language model. Alternatively, their approach can also be trained without parallel data only on concatenated monolingual datasets. Similar to results for word-level alignment-based methods \cite{Soegaard2018}, the weak supervision induced by sharing the vocabulary between languages is strong enough as an inductive bias to learn useful cross-lingual representations.

\subsection{Sentence Alignment with Comparable Data}


\paragraph{Grounding language in images} Similarly to approaches based on word-level aligned comparable data, methods that learn cross-lingual representations using sentence alignment with comparable data do so by associating sentences with images \cite{Kadar:2017arxiv}. The associated image captions/annotations can be direct translations of each other, but are not expected to be in general. The images are then used as pivots to induce a shared multimodal embedding space. These approaches typically use Multi30k \cite{Elliott2016}, a multilingual extension of the Flickr30k dataset \cite{Young2014}, which contains 30k images and 5 English sentence descriptions and their German translations for each image. \citeauthor{Calixto2017} (2017) represent images using features from a pre-trained CNN and model sentences using a GRU. They then use MML to assign a higher score to image-description pairs compared to images with a random description.
\citeauthor{Gella2017} (2017) augment this objective with another MML term that also brings the representations of translated descriptions closer together, thus effectively combining learning signals from both visual and textual modality.

\paragraph{Summary} While recent work has almost exclusively focused on word-level mapping-based approaches, recent language model-based approaches \cite{Artetxe2018e,Lample2019} have started to incorporate parallel resources. Mapping-based approaches that have been shown to rely on the assumption that the embedding spaces in two languages are approximately isomorphic struggle when mapping between a high-resource and a more distant low-resource language \cite{Soegaard2018}. As research increasingly considers this more realistic setting, the additional supervision and context provided by sentence-level alignment may prove to be a valuable resource---and complementary to word-level alignment. Early results in this direction indicate that joint training on parallel corpora yields embeddings that are more isomorphic and less sensitive to hubness than mapping-based approaches \cite{Ormazabal2019}. Consequently, we expect a resurgence of interest in sentence-level alignment methods.

\section{Document-Level Alignment Models} \label{sec:document_level_alignment_models}

Models that require parallel document alignment presuppose that sentence-level parallel alignment is also present. Such models thus reduce to parallel sentence-level alignment methods, which have been discussed in the previous section. Comparable document-level alignment, on the other hand, is more appealing as it is often cheaper to obtain. Existing approaches generally use Wikipedia documents, which they either automatically align, or they employ already theme-aligned Wikipedia documents discussing similar topics.

\subsection{Document Alignment with Comparable Data}

We divide models using document alignment with comparable data into three types, some of which employ similar general ideas to previously discussed word and sentence-level parallel alignment models:

\begin{itemize}
\item[a)] \textbf{Approaches based on pseudo-bilingual document-aligned corpora} automatically construct a pseudo-bilingual corpus containing words from the source and target language by mixing words from document-aligned documents.
\item[b)] \textbf{Concept-based methods} leverage information about the distribution of latent topics or concepts in document-aligned data to represent words.
\item[c)] \textbf{Extensions of sentence-aligned models} extend methods using sentence-aligned parallel data to also work without parallel data. 
\end{itemize}

\paragraph{Pseudo-bilingual document-aligned corpora} The approach of \citeauthor{VulicMoens2016} (2016) is similar to the pseudo-bilingual corpora approaches discussed in Section \ref{sec:word_level_alignment_models}. In contrast to previous methods, they propose a \emph{Merge and Shuffle} strategy to merge two aligned documents of different languages into a pseudo-bilingual document. This is done by concatenating the documents and then randomly shuffling them by permuting words. The intuition is that as most methods rely on learning word embeddings based on their context, shuffling the documents will lead to robust bilingual contexts for each word. As the shuffling step is completely random, it might lead to sub-optimal configurations.

For this reason, they propose another strategy for merging the two aligned documents, called \emph{Length-Ratio Shuffle}. It assumes that the structures of the two documents are similar: words are inserted into the pseudo-bilingual document by alternating between the source and the target document relying on the order in which they appear in their monolingual document and based on the monolingual documents' length ratio.


\paragraph{Concept-based models} Some methods for learning cross-lingual word embeddings leverage the insight that words in different languages are similar if they are used to talk about or evoke the same multilingual concepts or topics. \citeauthor{Vulic2013a} \citeyear{Vulic2013a} base their method on the cognitive theory of semantic word responses. Their method centers on the intuition that words in source and target language are similar if they are likely to generate similar words as their top semantic word responses. They utilise a probabilistic multilingual topic model again trained on aligned Wikipedia documents to learn and quantify semantic word responses. The embedding $\mathbf{x}_i^s \in \mathbb{R}^{|V^s| + |V^t|} $ of source word $w_i$ is the following vector:
\begin{align}
\mathbf{x}_i^s = [P(w_1^s|w_i), \ldots, P(w_{|V^s|}^s|w_i), P(w_1^t|w_i) \ldots, P(w_{|V^t|}^t|w_i)]
\end{align}
where $[\cdot, \cdot]$ represents concatenation and $P(w_j|w_i)$ is the probability of $w_j$ given $w_i$ under the induced bilingual topic model. The sparse representations may be turned into dense vectors by factorizing the constructed word-response matrix.


\citeauthor{Sogaard2015} (2015) propose an approach that relies on the structure of Wikipedia itself. Their method is based on the intuition that similar words are used to describe the same concepts across different languages. Instead of representing every Wikipedia concept with the terms that are used to describe it, they use an inverted index and represent a word by the concepts it is used to describe. As a post-processing step, dimensionality reduction on the produced word representations in the word-concept matrix is performed. A very similar model by \cite{Vulic:2011acl} uses a bilingual topic model to perform the dimensionality reduction step and learns a shared cross-lingual topical space.

\paragraph{Extensions of sentence-alignment models} \citeauthor{Mogadala2016} (2016) extend the approach of \citeauthor{Pham2015} (2015) to also work without parallel data and adjust the regularization term $\Omega$ based on the nature of the training corpus. Similar to previous work \cite{Hermann2013,Gouws2015}, they use the mean of the word embeddings of a sentence as the sentence representation $\mathbf{y}$ and constrain these to be close together. In addition, they propose to constrain the sentence paragraph vectors $\mathbf{p}^s$ and $\mathbf{p}^t$ of aligned sentences $sent^s$ and $sent^t$ to be close to each other. These vectors are learned via paragraph vectors \cite{Le2014a} for each sentence and stored in embedding matrices $\mathbf{P}^s$ and $\mathbf{P}^t$. The complete regularizer then uses elastic net regularization to combine both terms:
\begin{equation}
\Omega = \sum_{(sent^s, sent^t) \in \mathcal{C}} \alpha ||\mathbf{p}^s - \mathbf{p}^t||^2 + (1-\alpha) \| \mathbf{y}^s - \mathbf{y}^t \|^2
\end{equation}
The monolingual paragraph vector objectives $\mathcal{L}_{\text{SGNS-P}}$ are then optimized jointly with the cross-lingual regularization term:
\begin{equation}
J = \mathcal{L}_{\text{SGNS-P}}^s(\mathbf{P}^s, \mathbf{X}^s) + \mathcal{L}_{\text{SGNS-P}}^t(\mathbf{P}^t, \mathbf{X}^t) + \Omega(\mathbf{P}^s, \mathbf{P}^t, \mathbf{X}^s, \mathbf{X}^t)
\end{equation}
To leverage data that is not sentence-aligned, but where an alignment is still present on the document level, they propose a two-step approach: They use Procrustes analysis \cite{schonemann1966procrustes}, a method for statistical shape analysis, to find the most similar document in language $t$ for each document in language $s$. This is done by first learning monolingual representations of the documents in each language using paragraph vectors on each corpus. Subsequently, Procrustes analysis aims to learn a transformation between the two vector spaces by translating, rotating, and scaling the embeddings in the first space until they most closely align to the document representations in the second space. 
In the second step, they then simply use the previously described method to learn cross-lingual word representations from the alignment documents, this time treating the entire documents as paragraphs.

\paragraph{Summary} So far, document-level alignment has only been shown to provide little additional information compared to sentence-level alignment-based methods and interest in such methods has subsided, together with work on sentence-level alignment methods. Most document-level alignment methods are derived from their sentence-level counterparts. As research turns again to leveraging such methods for jointly learning cross-lingual embeddings, we expect that document-level information will also be considered as a signal. In addition, as tasks such as question answering where models learn representations of documents become more popular, learning cross-lingual document representation will likely become an active area of research once suitable cross-lingual benchmarks are available.

\begin{table}[]
\centering
\resizebox{\textwidth}{!}{%
\begin{tabular}{l c c c l}
\toprule Approach & Monolingual & Regularizer & Joint? & Description \\
\midrule 
Klementiev et al. (2012) & $ \mathcal{L}_{\text{MLE}}$ & $\Omega_{\text{MSE}} $ & \checkmark & Joint\\
Mikolov et al. (2013b) & $\mathcal{L}_{\text{SGNS}}$ & $\Omega_{\text{MSE}} $ & & Projection-based \\
Zou et al. (2013) & $\mathcal{L}_{\text{MMHL}}$ & $\Omega_{\text{MSE}} $ & & Matrix factorization\\
Hermann and Blunsom (2013) & $\mathcal{L}_{\text{MMHL}}$ & $\Omega_{\text{MSE}}^* $ & \checkmark & Sentence-level, joint\\
{Hermann and Blunsom (2014)} & {$\mathcal{L}_{\text{MMHL}}$} & $\Omega_{\text{MSE}}^* $ & \checkmark & Sentence-level + bigram composition\\
{Soyer et al. (2015)} & {$\mathcal{L}_{\text{MMHL}}$} & $\Omega_{\text{MSE}}^* $ & \checkmark & Phrase-level\\
Shi et al. (2015) & $\mathcal{L}_{\text{MMHL}}$ & $\Omega_{\text{MSE}} $ & & Matrix factorization\\
Dinu et al. (2015) & $\mathcal{L}_{\text{SGNS}}$ & $\Omega_{\text{MSE}} $ & & Better neighbour retrieval\\
Gouws et al. (2015) & $\mathcal{L}_{\text{SGNS}}$ & $\Omega_{\text{MSE}}$ & \checkmark & Sentence-level\\
Vyas and Carpuat (2016) & $\mathcal{L}_{\text{GloVe}}$ & $\Omega_{\text{MSE}} $ & & Sparse matrix factorization \\
Hauer et al. (2017) & $\mathcal{L}_{\text{SGNS}}$ & $\Omega_{\text{MSE}}$ & & Cognates\\
Mogadala and Rettinger (2016) & $\mathcal{L}_{\text{SGNS-P}}$ & $\Omega_{\text{MSE}}$ & \checkmark & Elastic net, Procrustes analysis\\
\midrule 

Xing et al. (2015) & $\mathcal{L}_{\text{SGNS}}$ & $\Omega_{\text{MSE}} \: \text{s.t.} \: \mathbf{W}^\top\mathbf{W} = \mathbf{I} $ & & Normalization, orthogonality \\
Zhang et al. (2016b) & $\mathcal{L}_{\text{SGNS}}$ & $\Omega_{\text{MSE}} \: \text{s.t.} \: \mathbf{W}^\top\mathbf{W} = \mathbf{I} $ & & Orthogonality constraint\\
\multirow{2}{*}{Artexte et al. (2016)} & \multirow{2}{*}{$\mathcal{L}_{\text{SGNS}}$} & \multirow{2}{*}{$\Omega_{\text{MSE}} \: \text{s.t.} \: \mathbf{W}^\top\mathbf{W} = \mathbf{I} $} & & Normalization, orthogonality,\\
& & & & mean centering \\
\multirow{2}{*}{Smith et al. (2017)} & \multirow{2}{*}{$\mathcal{L}_{\text{SGNS}}$} & \multirow{2}{*}{$\Omega_{\text{MSE}} \: \text{s.t.} \: \mathbf{W}^\top\mathbf{W} = \mathbf{I} $} & & Orthogonality, inverted softmax \\
& & & & identical character strings \\
\multirow{2}{*}{Artexte et al. (2017)} & \multirow{2}{*}{$\mathcal{L}_{\text{SGNS}}$} & \multirow{2}{*}{$\Omega_{\text{MSE}} \: \text{s.t.} \: \mathbf{W}^\top\mathbf{W} = \mathbf{I} $} & & Normalization, orthogonality,\\
& & & & mean centering, bootstrapping \\
\midrule

Lazaridou et al. (2015) & $\mathcal{L}_{\text{CBOW}}$ & $\Omega_{\text{MMHL}}$ & & Max-margin with intruders\\
Mrkšić et al. (2017b) & $\mathcal{L}_{\text{SGNS}}$ & $\Omega_{\text{MMHL}}$ & & Semantic specialization\\
Calixto et al. (2017) & $\mathcal{L}_{\text{RNN}}$&$\Omega_{\text{MMHL}}$ & \checkmark & Image-caption pairs\\
Gella et al. (2017) & $\mathcal{L}_{\text{RNN}}$&$\Omega_{\text{MMHL}}$ & \checkmark & Image-caption pairs\\
\midrule

Faruqui and Dyer (2014) & $\mathcal{L}_{\text{LSA}}$ & $\Omega_{\text{CCA}}$ & & - \\
Lu et al. (2015) & $\mathcal{L}_{\text{LSA}}$ & $\Omega_{\text{CCA}}$ & & Neural CCA \\
Rajendran et al. (2016) & $\mathcal{L}_{\text{AUTO}}$ & $\Omega_{\text{CCA}}$ & & Pivots\\
Ammar et al. (2016b) & $\mathcal{L}_{\text{LSA}}$ & $\Omega_{\text{CCA}}$ & & Multilingual CCA\\
\midrule

Søgaard et al. (2015) & - & $\Omega_{\text{SVD}}$ & \checkmark & Inverted indexing\\
Levy et al. (2017) & $\mathcal{L}_{\text{PMI}}$ & $\Omega_{\text{SVD}}$ & \checkmark & \\
Levy et al. (2017) & - & $\Omega_{\text{SGNS}}$ & \checkmark & Inverted indexing\\
\midrule

Lauly et al. (2013) & $\mathcal{L}_{\text{AUTO}}$ & $\Omega_{\text{AUTO}}$ & \checkmark & Autoencoder \\
Chandar et al. (2014) & $\mathcal{L}_{\text{AUTO}}$ & $\Omega_{\text{AUTO}}$ & \checkmark & Autoencoder\\

\midrule

Vulić and Moens (2013a) & $\mathcal{L}_{\text{LDA}}$ & $\Omega_\infty^*$ & \checkmark & Document-level\\
Vulić and Moens (2014) & $\mathcal{L}_{\text{LDA}}$ & $\Omega_\infty^*$ & \checkmark & Document-level\\
Xiao and Guo (2014) & $\mathcal{L}_{\text{MMHL}}$ & $\Omega_\infty$ & \checkmark & Pseudo-multilingual\\
Gouws and Søgaard (2015) & $\mathcal{L}_{\text{CBOW}}$ & $\Omega_\infty^*$ & \checkmark & Pseudo-multilingual\\
Luong et al. (2015) & $\mathcal{L}_{\text{SGNS}}$ & $\Omega_\infty^*$  & & Monotonic alignment \\
Gardner et al. (2015) & $\mathcal{L}_{\text{LSA}}$ & $\Omega_\infty^*$
& & Matrix factorization\\
Pham et al. (2015) & $\mathcal{L}_{\text{SGNS-P}}$ & $\Omega_\infty$ & \checkmark & Paragraph vectors\\
Guo et al. (2015) & $\mathcal{L}_{\text{CBOW}}$ &  $\Omega_\infty$ & & Weighted by word alignments\\
Coulmance et al. (2015) & $\mathcal{L}_{\text{SGNS}}$ & $\Omega_\infty^*$ & \checkmark & Sentence-level \\
Ammar et al. (2016a) & $\mathcal{L}_{\text{SGNS}}$ & $\Omega_\infty$ & \checkmark & Pseudo-multilingual\\
Vulić and Korhonen (2016) & $\mathcal{L}_{\text{SGNS}}$ & $\Omega_\infty$ & & Highly reliable seed entries\\
Duong et al. (2016) & $\mathcal{L}_{\text{CBOW}}$ & $\Omega_\infty$ & \checkmark & Pseudo-multilingual, polysemy\\
Vulić and Moens (2016) & $\mathcal{L}_{\text{SGNS}}$ & $\Omega_\infty$ & \checkmark & Pseudo-multilingual documents \\
Adams et al. (2017) & $\mathcal{L}_{\text{CBOW}}$ & $\Omega_\infty$ & \checkmark & Pseudo-multilingual, polysemy\\
\midrule

Bergsma and Van Durme (2011) & - & - & \checkmark & {\sc SIFT} image features, similarity\\
Kiela et al. (2015) & - & - & \checkmark & CNN image features, similarity\\
Vulić et al. (2016) & - & - & \checkmark & CNN features, similarity, interpolation\\
\midrule

Gouws and Søgaard (2015) & $\mathcal{L}_{\text{CBOW}}$ & POS-level $\Omega_\infty^*$ & \checkmark & Pseudo-multilingual\\
Duong et al. (2015) & $\mathcal{L}_{\text{CBOW}}$ & POS-level $\Omega_\infty^*$ & \checkmark & Pseudo-multilingual\\
\bottomrule
\end{tabular}%
}
\caption{Overview of approaches with monolingual objectives and regularization terms, with an indication whether the order of optimization matters and short descriptions. $\Omega_\infty$ represents an infinitely strong regularizer that enforces equality between representations. ${}^*$ implies that the regularization is achieved in the limit.}
\label{tab:overview_objectives}
\end{table}

As a final overview, we list all approaches with their monolingual objectives and regularization terms in Table~\ref{tab:overview_objectives}. The table is meant to reveal the \emph{high-level} objectives and losses each model is optimizing. It also indicates for each method whether all objectives are jointly optimized; if they are, both monolingual losses and regularization terms are optimized jointly; otherwise the monolingual losses are optimized first and the monolingual variables are frozen, while the cross-lingual regularization constraint is optimized. The table obscures smaller differences and implementation details, which can be found in the corresponding sections of this survey or by consulting the original papers. We use $\Omega_\infty$ to represent an infinitely stronger regularizer that enforces equality between representations. Regularizers with a ${}^*$ imply that the regularization is achieved in the limit, e.g. in the pseudo-bilingual case, where examples are randomly sampled with some equivalence, we obtain the same representation in the limit, without strictly enforcing it to be the same representation.

As we have demonstrated, most approaches can be seen as optimizing a combination of monolingual losses with a regularization term. As we can see, some approaches do not employ a regularization term; notably, a small number of approaches, i.e., those that ground language in images, do not optimize a loss but rather use pre-trained image features and a set of similarity heuristics to retrieve translations.

\section{From Bilingual to Multilingual Training} \label{sec:multilingual_training}
So far, for the sake of simplicity and brevity of presentation, we have put focus on models which induce cross-lingual word embeddings in a shared space comprising only two languages. This standard bilingual setup is also in the focus of almost all research in the field of cross-lingual embedding learning. However, notable exceptions such as the recent work of \citeauthor{Levy2017} (2017) and \citeauthor{Duong:2017eacl} (2017) demonstrate that there are clear benefits to extending the learning process from bilingual to \textit{multilingual} settings, with improved results reported on standard tasks such as word similarity prediction, bilingual dictionary induction, document classification and dependency parsing.

The usefulness of multilingual training for NLP is already discussed by, e.g., \citeauthor{Naseem:2009jair} (2009) and \citeauthor{Snyder:2010icml} (2010). They corroborate a hypothesis that ``variations in ambiguity'' may be used as a form of naturally occurring supervision. In simple words, what one language leaves implicit, another defines explicitly and the target language is thus useful for resolving disambiguation in the source language \cite{Faruqui2014}. While this hypothesis is already true for bilingual settings, using additional languages introduces additional supervision signals which in turn leads to better word embedding estimates (Mrkšić et al., 2017b).

In most of the literature focused on bilingual settings, English is typically on one side, owing its wide use to the wealth of both monolingual resources available for English as well as bilingual resources, where English is paired with many other languages. However, one would ideally want to also exploit cross-lingual links between other language pairs, reaching beyond English. For instance, typologically/structurally more similar languages such as Finnish and Estonian are excellent candidates for transfer learning. Yet, only few readily available parallel resources exist between Finnish and Estonian that could facilitate a direct induction of a shared bilingual embedding space in these two languages.

A multilingual embedding model which maps Finnish and Estonian to the same embedding space shared with English (i.e., English is used as a resource-rich \textit{pivot language}) would also enable exploring and utilizing links between Finnish and Estonian lexical items in the space \cite{Duong:2017eacl}. Further, multilingual shared embedding spaces enable multi-source learning and multi-source transfers: this results in a more general model and is less prone to data sparseness \cite{McDonald:2011emnlp,Agic:2016tacl,Guo:2016aaai,Zoph:2016naacl,Firat:2016naacl}. 
\begin{figure*}[t]
    \centering
    \begin{subfigure}[t]{0.45\linewidth}
        \centering
        \includegraphics[width=1.0\linewidth]{./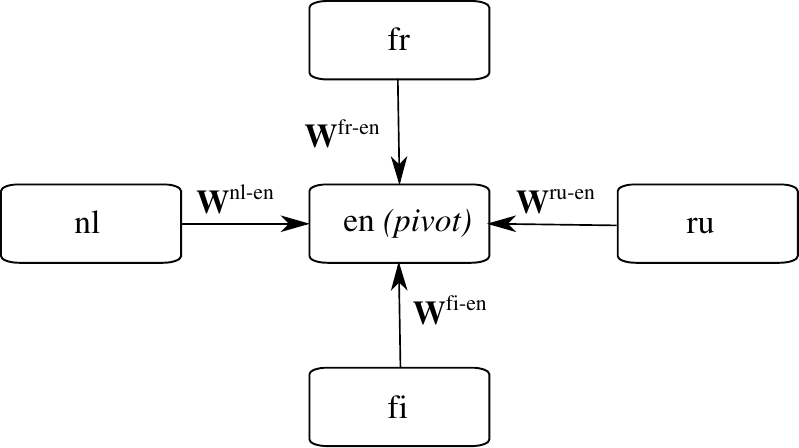}
        \caption{Starting spaces: monolingual}
        \label{fig:multimap1}
    \end{subfigure}
    \begin{subfigure}[t]{0.45\textwidth}
        \centering
        \includegraphics[width=0.98\linewidth]{./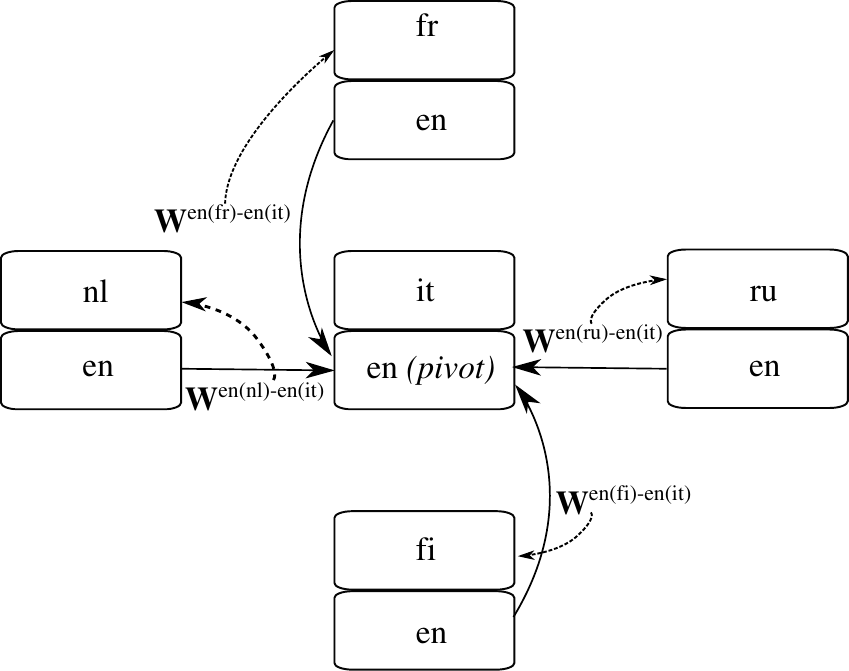}
        \caption{Starting spaces: bilingual}
        \label{fig:multimap2}
    \end{subfigure}
    \vspace{-0.0em}
    \caption{Learning shared multilingual embedding spaces via linear mapping. (a) Starting from monolingual spaces in $L$ languages, one linearly maps $L-1$ into one chosen pivot monolingual space (typically English); (b) Starting from bilingual spaces sharing a language (typically English), one learns mappings from all other English subspaces into one chosen pivot English subspace and then applies the mapping to all other subspaces.}
\vspace{-0.3em}
\label{fig:multimap}
\end{figure*}

The purpose of this section is not to demonstrate the multilingual extension of every single bilingual model discussed in previous sections, as these extensions are not always straightforward and include additional modeling work. However, we will briefly present and discuss several best practices and multilingual embedding models already available in the literature, again following the typology of models established in Table~\ref{tab:cross-lingual_embedding_models}.

\subsection{Multilingual Word Embeddings from Word-Level Information}
\label{ss:multi_word_level}
\paragraph{Mapping-based approaches} 
Mapping $L$ different monolingual spaces into a single multilingual shared space is achieved by: (1) selecting one space as the \textit{pivot space}, and then (2) mapping the remaining $L-1$ spaces into the same {pivot space}. This approach, illustrated by Figure~\ref{fig:multimap1}, requires $L$ monolingual spaces and $L-1$ seed translation dictionaries to achieve the mapping. Labeling the pivot language as $l^p$, we can formulate the induction of a multilingual embedding space as follows:

\begin{align}
\mathcal{L}^1 + \mathcal{L}^2 + \ldots + \mathcal{L}^{L-1} + \mathcal{L}^{p} + \Omega^{l^1\rightarrow l^p} + \Omega^{l^2\rightarrow l^p} + \ldots + \Omega^{l^{L-1}\rightarrow l^p} 
\end{align}
This means that through \textit{pivoting} one is able to induce a shared bilingual space for a language pair without having any directly usable bilingual resources for the pair. Exactly this multilingual mapping procedure (based on minimizing mean squared errors) has been constructed by \citeauthor{Smith2017} (2017): English is naturally selected as the pivot, and 89 other languages are then mapped into the pivot English space. Seed translation pairs are obtained through Google Translate API by translating the 5,000 most frequent words in each language to English. The recent work of, e.g., \citeauthor{Artetxe:2017acl} (2017) holds promise that seed lexicons of similar sizes may also be bootstrapped for resource-lean languages from very small seed lexicons (see again Section~\ref{sec:word_level_alignment_models}). \citeauthor{Smith2017} use original fastText vectors available in 90 languages \cite{Bojanowski:2017tacl}\footnote{The latest release of fastText vectors contains vectors for 204 languages. The vectors are available here: \\ \texttt{https://github.com/facebookresearch/fastText}} and effectively construct a multilingual embedding space spanning 90 languages (i.e., 4005 language pairs using 89 seed translation dictionaries) in their software and experiments.\footnote{\texttt{https://github.com/Babylonpartners/fastText\_multilingual}} The distances in all monolingual spaces remain preserved by constraining the transformation to be orthogonal.
\begin{figure*}[t]
    \centering
        \includegraphics[width=0.55\linewidth]{./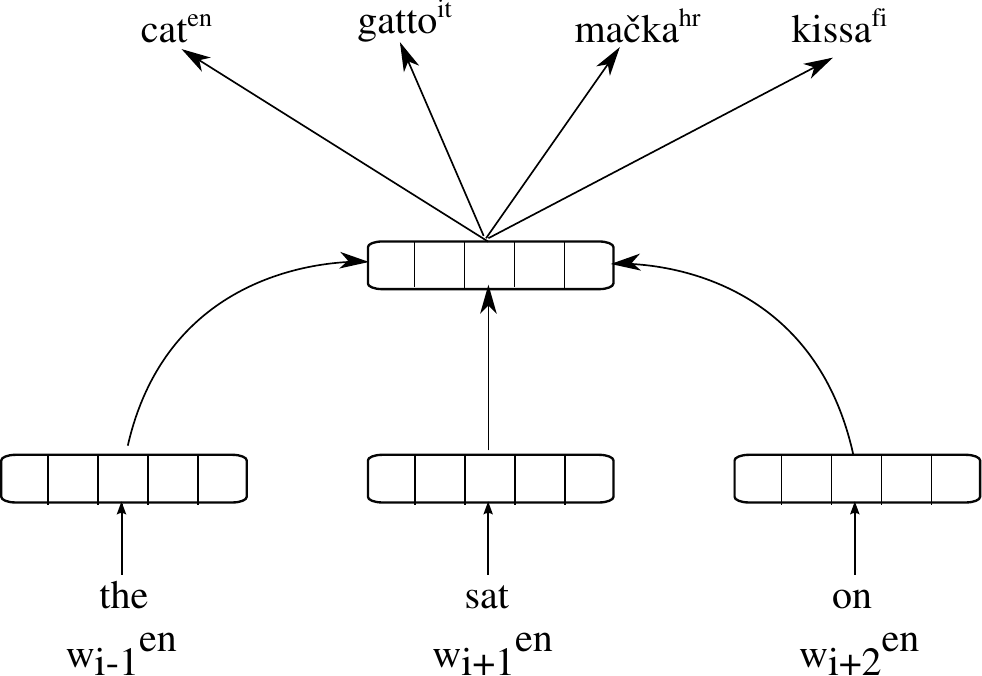}
        \caption{Illustration of the joint multilingual model of Duong et al. (2017) based on the modified CBOW objective; instead of predicting only the English word given the English context, the model also tries to predict its translations in all the remaining languages (i.e., in languages for which the translations exist in any of the input bilingual lexicons).}
        \label{fig:multicbow}
\end{figure*}

Along the same line, \citeauthor{Ammar2016} \citeyear{Ammar2016} introduce a multilingual extension of the CCA-based mapping approach. They perform a multilingual extension of bilingual CCA projection again using the English embedding space as the pivot for multiple \textit{English}-$l^t$ bilingual CCA projections with the remaining $L-1$ languages.

As demonstrated by Smith et al. \citeyear{Smith2017}, the multilingual space now enables reasoning for language pairs not represented in the seed lexicon data. They verify this hypothesis by examining the bilingual lexicon induction task for all $L\choose 2$ language pairs: e.g., BLI Precision-at-one ($P@1$) scores\footnote{$P@1$ is a standard evaluation measure for bilingual lexicon induction that refers to the proportion of source test words for which the best translation is ranked as the most similar word in the target language.} for Spanish-Catalan without any seed Spanish-Catalan lexicon are 0.82, while the average $P@1$ score for Spanish-English and Catalan-English bilingual spaces is 0.70. Other striking findings include $P@1$ scores for Russian-Ukrainian (0.84 vs. 0.59), Czech-Slovak (0.82 vs. 0.59), Serbian-Croatian (0.78 vs. 0.56), or Danish-Norwegian (0.73 vs. 0.67).

A similar approach to constructing a multilingual embedding space is discussed by Duong et al. \citeyear{Duong:2017eacl}. However, their mapping approach is tailored for another scenario frequently encountered in practice: one has to align bilingual embedding spaces where English is one of the two languages in each bilingual space. In other words, our starting embedding spaces are now not monolingual as in the previous mapping approach, but bilingual. The overview of the approach is given in Figure~\ref{fig:multimap2}. This approach first selects a pivot bilingual space (e.g., this is the EN-IT space in Figure~\ref{fig:multimap2}), and then learns a linear mapping/transformation from the English subspace of all other bilingual spaces into the pivot English subspace. The learned linear mapping is then applied to other subspaces (i.e., ``foreign'' subspaces such as FI, FR, NL, or RU in Figure~\ref{fig:multimap2}) to transform them into the shared multilingual space. 

\begin{figure*}[t]
    \centering
        \includegraphics[width=0.91\linewidth]{./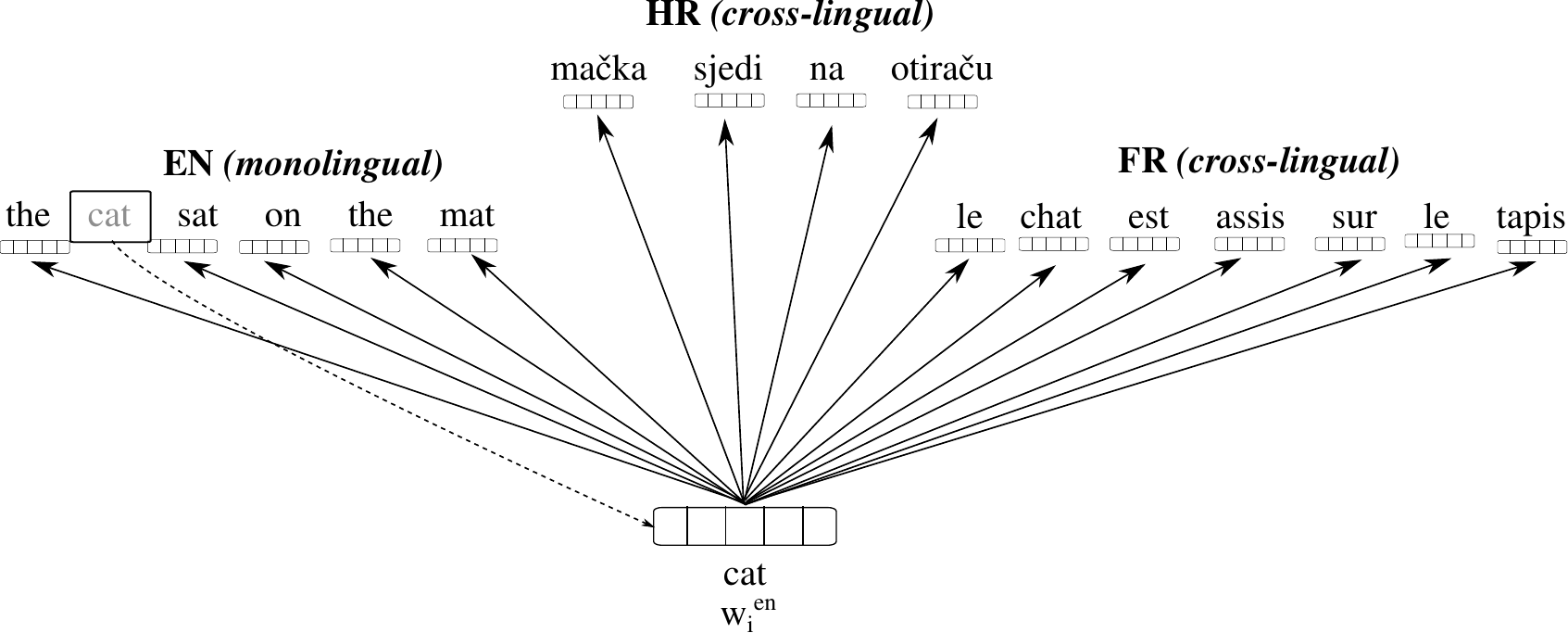}
        \caption{A multilingual extension of the sentence-level TransGram model of Coulmance et al. (2015). Since the model bypasses the word alignment step in its SGNS-style objective, for each center word (e.g., the EN word \textit{cat} in this example) the model predicts \textit{all} words in each sentence (from all other languages) which is aligned to the current sentence (e.g., the EN sentence \textit{the cat sat on the mat}).}
        \label{fig:multitransgram}
\end{figure*}

\paragraph{Pseudo-bilingual and joint approaches}
The two other sub-groups of word-level models also assume the existence of monolingual data plus multiple bilingual dictionaries covering translations of the same term in multiple languages. The main idea behind \textit{pseudo-multilingual} approaches is to ``corrupt'' monolingual data available for each of the $L$ languages so that words from all languages are present as context words for every center word in all monolingual corpora. A standard monolingual word embedding model (e.g., CBOW or SGNS) is then used to induce a shared multilingual space. First, for each word in each vocabulary one collects all translations of that word in all other languages. The sets of translations may be incomplete as they are dependent on their presence in input dictionaries. Following that, we use all monolingual corpora in all $L$ languages and proceed as the original model of \citeauthor{Gouws2015a} (2015): (i) for each word $w$ for which there is a set of translations of size $k_{mt}$, we flip a coin and decide whether to retain the original word $w$ or to substitute it with one of its $k_{mt}$ translations; (ii) in case we have to perform a substitution, we choose one translation as a substitute with a probability of $\frac{1}{2k_{mt}}$. In the limit, this method yields ``hybrid'' pseudo-multilingual sentences with each word surrounded by words from different languages. Despite its obvious simplicity, the only work that generalizes pseudo-bilingual approaches to the multilingual setting that we are aware of is by \citeauthor{Ammar2016} \citeyear{Ammar2016} who replace all tokens in monolingual corpora with their corresponding translation cluster ID, thereby restricting them to have the same representation. Note again that we do not need lexicons for all language pairs in case one resource-rich language (e.g., English) is selected as the pivot language.

Joint multilingual models rely on exactly the same input data (i.e., monolingual data plus multiple bilingual dictionaries) and the core idea is again to exploit multilingual word contexts. An extension of the \textit{joint} modeling paradigm to multilingual settings, illustrated in Figure~\ref{fig:multicbow}, is discussed by Duong et al \citeyear{Duong:2017eacl}. The core model is an extension of their bilingual model \cite{Duong2016} based on the CBOW-style objective: in the multilingual scenario with $L$ languages, for each language $l^i$ the training procedure consists of predicting the center word in language $l^i$ given the monolingual context in $l^i$ plus predicting all translations of the center word in all other languages, subject to their presence in the input bilingual dictionaries. Note that effectively this MultiCBOW model may be seen as a combination of multiple monolingual and cross-lingual CBOW-style sub-objectives as follows:

{\small
\begin{align}
J = \mathcal{L}_{\text{CBOW}}^{1} + \mathcal{L}_{\text{CBOW}}^2 + \cdots + \mathcal{L}_{\text{CBOW}}^L + \mathcal{L}_{\text{CBOW}}^{1 \rightarrow 2} + \mathcal{L}_{\text{CBOW}}^{2 \rightarrow 1} + \cdots \mathcal{L}_{\text{CBOW}}^{(L-1) \rightarrow L} + \mathcal{L}_{\text{CBOW}}^{L \rightarrow (L-1)} 
\label{eq:multicbow}
\end{align}}%
where the cross-lingual part of the objective again serves as the cross-lingual regularizer $\Omega$. By replacing the CBOW-style objective with the SGNS objective, the model described by Equation~\eqref{eq:multicbow} effectively gets transformed to the straightforward multilingual extension of the bilingual BiSkip model \cite{Luong2015b}. Exactly this model, called MultiSkip, is described in the work of Ammar et al. \citeyear{Ammar2016}. Instead of summing contexts around the center word as in CBOW, the model now tries to predict surrounding context words of the center word in its own language, plus its translations and all surrounding context words of its translations in all other languages. Translations are again obtained from input dictionaries or extracted from word alignments as in the original BiSkip and MultiSkip models. The pivot language idea is also applicable with the MultiSkip and MultiCBOW models.

\subsection{Multilingual Word Embeddings from Sentence-Level and Document-Level Information}
Extending bilingual embedding models which learn on the basis of aligned sentences and documents closely follows the principles already established for word-level models in Section~\ref{ss:multi_word_level}. For instance, the multilingual extension of the TransGram model from Coulmance et al. \citeyear{Coulmance2015} may be seen as MultiSkip without word alignment information (see again Table~\ref{tab:bilingual_skip_gram_comparison}). In other words, instead of predicting only words in the neighborhood of the word aligned to the center word, TransGram predicts all words in the sentences aligned to the sentence which contains the current center word (i.e., the model assumes uniform word alignment). This idea is illustrated by Figure~\ref{fig:multitransgram}. English is again used as the pivot language to reduce bilingual data requirements. 

The same idea of pivoting, that is, learning multiple bilingual spaces linked through the shared pivot English space is directly applicable to other prominent bilingual word embedding models such as \cite{Chandar2014}, \cite{Hermann2014}, \cite{Soyer2015}, \cite{Zou2013}, \cite{Gouws2015}. \citeauthor{Artetxe2018e} \citeyear{Artetxe2018e} use both English and Spanish as pivot languages for learning sentence embeddings of 93 languages.

The document-level model of \citeauthor{Vulic2016} (2016) may be extended to the multilingual setting using the same idea as in previously discussed word-level pseudo-multilingual approaches. \citeauthor{Sogaard2015} (2015) and \citeauthor{Levy2017} (2017) exploit the structure of a multi-comparable Wikipedia dataset and a multi-parallel Bible dataset respectively to directly build sparse cross-lingual representations using the same set of shared indices (i.e., the former uses the indices of aligned Wikipedia articles while the latter relies on the indices of aligned sentences in the multi-parallel corpus). Dense word embeddings are then obtained by factorizing the multilingual word-concept matrix containing all words from all vocabularies.

\section{Evaluation} \label{sec:evaluation}

Given the wide array of cross-lingual models and their potential applications, there is an equally wide array of possible evaluation settings. Cross-lingual embeddings can be evaluated on a set of \emph{intrinsic} and \emph{extrinsic} tasks that directly measure the quality of the embeddings. In addition, as we detail in Section \ref{sec:applications}, cross-lingual embeddings can be employed to facilitate cross-lingual transfer in any of a wide array of downstream applications.

In the following, we discuss the most common intrinsic and extrinsic evaluation tasks that have been used to test cross-lingual embeddings and outline associated challenges. In addition, we also present benchmarks that can be used for evaluating cross-lingual embeddings and the most important findings of two recent empirical benchmark studies.

\subsection{Intrinsic Tasks}

The first two widely used tasks are \emph{intrinsic} evaluation tasks: word similarity and multiQVEC(+). These tasks evaluate cross-lingual embeddings in a controlled \emph{in vitro} setting that is geared towards revealing certain characteristics of the representations. The major downside with these tasks is that good performance on them does not generalize necessarily to good performance on downstream tasks \cite{Tsvetkov2015b,Schnabel:2015emnlp}.

\paragraph{Word similarity} This task evaluates how well the notion of word similarity according to humans is emulated in the vector space. Multi-lingual word similarity datasets are multilingual extensions of datasets that have been used for evaluating English word representations. Many of these originate from psychology research and consist of word pairs -- ranging from synonyms (e.g., car - automobile) to unrelated terms (e.g., noon - string) --  that have been annotated with a relatedness score by human subjects. The most commonly used ones of these human judgement datasets are: a) the RG dataset \cite{rubenstein1965contextual}; b) the MC dataset \cite{miller1991contextual}; c) the WordSim-353 dataset \cite{Finkelstein:2002}, a superset of MC; and d) the SimLex-999 dataset \cite{hill2016simlex}. Extending them to the multilingual setting then mainly involves translating the word pairs into different languages: WordSim-353 has been translated to Spanish, Romanian, and Arabic \cite{Hassan:2009emnlp} and to German, Italian, and Russian \cite{leviant2015judgment}; RG was translated to German \cite{gurevych2005using}, French, \cite{joubarne2011comparison}, Spanish and Farsi \cite{camacho2015framework}; and SimLex-999 was translated to German, Italian and Russian \cite{leviant2015judgment} and to Hebrew and Croatian (Mrkšić et al., 2017b).
Other prominent datasets for word embedding evaluation such as MEN \cite{bruni2014multimodal}, RareWords \cite{luong2013better}, and SimVerb-3500 \cite{Gerz:2016emnlp} have only been used in monolingual contexts.

The SemEval 2017 task on cross-lingual and multilingual word similarity \cite{Camacho:2017semeval} has introduced cross-lingual word similarity datasets between five languages: English, German, Italian, Spanish, and Farsi, yielding 10 new datasets in total. Each cross-lingual dataset is of reasonable size, containing between 888 and 978 word pairs. 

Cross-lingual embeddings are evaluated on these datasets by first computing the cosine similarity of the representations of the cross-lingual word pairs. The Spearman's rank correlation coefficient \cite{myers2010research} is then computed between the cosine similarity score and the human judgement scores for every word pair. Cross-lingual word similarity datasets are affected by the same problems as their monolingual variants \cite{faruqui2016problems}: the annotated notion of word similarity is subjective and is often confused with relatedness; the datasets evaluate semantic rather than task-specific similarity, which is arguably more useful; they do not have standardized splits; they correlate only weakly with performance on downstream tasks; past models do not use statistical significance; and they do not account for polysemy, which is even more important in the cross-lingual setting. 

\paragraph{multiQVEC/multiQVEC+} multiQVEC+ is a multilingual extension of QVEC \cite{tsvetkov2015evaluation}, a method that seeks to quantify the linguistic content of word embeddings by maximizing the correlation with a manually-annotated linguistic resource. A semantic linguistic matrix is first constructed from a semantic database. The word embedding matrix is then aligned with the linguistic matrix and the correlation is measured using cumulative dimension-wise correlation. \citeauthor{Ammar2016} \citeyear{Ammar2016} propose QVEC+, which computes correlation using CCA and extend QVEC to the multilingual setting (multiQVEC) by using supersense tag annotations in multiple languages to construct the linguistic matrix. While QVEC has been shown to correlate well with certain semantic downstream tasks, as an intrinsic evaluation task it can only approximately capture the performance as it relates to downstream tasks.

\subsection{Extrinsic Tasks}

The two following tasks, word alignment and bilingual dictionary induction, are {\em extrinsic}~in the sense that they measure performance on tasks that are potentially of real-world importance. The tasks, at the same time, both evaluate cross-lingual word embeddings in a relatively direct manner, as they rely on nearest neighbor search in the cross-lingual embedding space to identify the most similar target word given a source word.

\paragraph{Word alignment prediction} For word alignment prediction, each word in a given source language sentence is aligned with the most similar target language word from the target language sentence. If a source language word is out of vocabulary, it is not aligned with anything, whereas target language out-of-vocabulary words are given a default minimum similarity score, and never aligned to any candidate source language word in practice \cite{Levy2017}. The inverse of the alignment error rate (1-AER) \cite{koehn2009statistical} is typically used to measure performance, where higher scores mean better alignments. \citeauthor{Levy2017} (2017) use alignment data from Hansards\footnote{\url{https://www.isi.edu/natural-language/download/hansard/}} and from four other sources \cite{graca2008building,lambert2005guidelines,mihalcea2003evaluation,holmqvist2011gold}.

\paragraph{Bilingual dictionary induction} 
After the shared cross-lingual embedding space is induced, given a list of $N$ source language words $x_{u,1},\ldots,x_{u,N}$, the task is to find a target language word $t$ for each \textit{query word} $x_u$ relying on the representations in the space. $t_i$ is the target language word closest to the source language word $x_{u,i}$ in the induced cross-lingual space, also known as the {\em cross-lingual nearest neighbor}. The set of learned $N$ $(x_{u,i},t_i)$ pairs is then run against a gold standard dictionary.

Bilingual dictionary induction is appealing as an evaluation task, as high-quality, freely available, wide-coverage manually constructed dictionaries are still rare, especially for non-European languages. The task also provides initial intrinsic evidence on the quality of the shared space. \citeauthor{Upadhyay2016} (2016) obtain evaluation sets for the task across 26 languages from the Open Multilingual WordNet \cite{bond2013linking}, while \citeauthor{Levy2017} (2017) obtain bilingual dictionaries from Wiktionary for Arabic, Finnish, Hebrew, Hungarian, and Turkish. More recently \citeauthor{Wijaya:2017emnlp} (2017) build evaluation data for 28 language pairs (where English is always the target language) by semi-automatically translating all Wikipedia words with frequency above 100. Most previous work \cite{Vulic2013a,Gouws2015,Mikolov2013b} filters source and target words based on part-of-speech, though this simplifies the task and introduces bias in the evaluation. Each cross-lingual embedding model is then evaluated on its ability to select the closest target language word to a given source language word as the translation of choice and measured based on precision-at-one (P@1). A more lenient evaluation measure is precision-at-k (P@k), where the correct translation must be retrieved in the ranked list of top $k$ target language words, $k\geq1$.\footnote{Typical values for $k$ are 1, 5, or 10.}


\section{Applications} \label{sec:applications}

\paragraph{Cross-lingual transfer} Both word alignment prediction and bilingual dictionary induction rely on (constrained) nearest neighbor search in the cross-lingual word embedding graph based on computed similarity scores.  However, cross-lingual word embeddings can also be used directly as features in NLP models. Such models are then defined for several languages, and can be used to facilitate \textit{cross-lingual transfer}. In other words, the main idea is to train a model on data from one language and then to apply it to another relying on shared cross-lingual features. Extrinsic evaluation on such downstream tasks is often preferred, as it directly allows to evaluate the usefulness of the cross-lingual embedding model for the respective task. We briefly describe the cross-lingual tasks that people have used to evaluate cross-lingual embeddings:

\begin{itemize}
\item \textit{Document classification} is the task of classifying documents with respect to topic, sentiment, relevance, etc. The task is commonly used following the setup of \citeauthor{Klementiev2012} (2012): it uses the RCV2 Reuters multilingual corpus\footnote{\url{http://trec.nist.gov/data/reuters/reuters.html}}. A document classifier is trained to predict topics on the document representations derived from word embeddings in the source language and then tested on the documents of the target language. Such representations typically do not take word order into account, and the standard embedding-based representation is to represent documents by the TF-IDF weighted average over the embeddings of the individual words, with an averaged perceptron model (or some other standard off-the-shelf classification model) acting as the document classifier. Word embeddings can also be used to seed more sophisticated classifiers based on convolutional or recurrent neural networks \cite{Liu:2016ijcai,Mandelbaum:2016arxiv,Zhang:2016naacl}. Although it is clear that cross-lingual word embeddings are instrumental to cross-lingual document classification, the task might be considered suboptimal for a full-fledged extrinsic evaluation of embeddings. It only evaluates topical associations and provides a signal for sets of co-occurring words, not for the individual words.
\item {\it Dependency parsing} is the task that constructs the grammatical structure of a sentence, establishing typed relationships between ``head'' words and words which modify those heads. In a cross-lingual setting \citeauthor{Tackstrom:2012naacl} (2012) proposed a parser transfer model that employed cross-lingual similarity measures based on cross-lingual Brown clusters. When relying on cross-lingual word embeddings, similar to cross-lingual document classification, a dependency parsing model is trained using the embeddings for a source language and is then evaluated on a target language. In the setup of \citeauthor{Guo2015} (2015), a transition-based dependency parser with a non-linear activation function is trained on Universal Dependencies data \cite{mcdonald2013universal}, with the source-side embeddings as lexical features\footnote{\url{https://github.com/jiangfeng1124/acl15-clnndep}}.
\item {\it POS tagging}, the task of assigning parts-of-speech to words, is usually evaluated using the Universal Dependencies treebanks (Nivre et al., 2016a)
as these are annotated with the same universal tag set. \citeauthor{Zhang2016} \citeyear{Zhang2016} furthermore map proper nouns to nouns and symbol makers (e.g. ``-'', ``/'') and interjections to an X tag as it is hard and unnecessary to disambiguate them in a low-resource setting. \citeauthor{Fang2017} (2017) use data from the CoNLL-X datasets of European languages \cite{buchholz2006conll}, from CoNLL 2003\footnote{\url{http://www.cnts.ua.ac.be/conll2003/ner/}} and from \citeauthor{das2011unsupervised} (2011), the latter of which is also used by \citeauthor{Gouws2015a} (2015). 
\item {\it Named entity recognition (NER)} is the task of tagging entities with their appropriate type in a text. \citeauthor{Zou2013} (2013) perform NER experiments for English and Chinese on OntoNotes \cite{hovy2006ontonotes}, while \citeauthor{Murthy2016} (2016) use English data from CoNLL 2003 \cite{tjong2003introduction} and Spanish and Dutch data from CoNLL 2002 \cite{sang2002introduction}.
\item {\it Super-sense tagging} is the task that involves annotating each significant entity in a text (e.g., nouns, verbs, adjectives and adverbs) within a general semantic taxonomy defined by the WordNet lexicographer classes (called super-senses). The cross-lingual variant of the task is used by \citeauthor{Gouws2015a} (2015) for evaluating their embeddings. They use the English data from SemCor\footnote{\url{http://web.eecs.umich.edu/mihalcea/
downloads.html#semcor}} and publicly available Danish data\footnote{\url{https://github.com/coastalcph/noda2015_sst}}.
\item {\it Semantic parsing} is the task of automatically identifying semantically salient targets in the text. Frame-semantic parsing, in particular, disambiguates the targets by assigning a sense (frame) to them, identifies their arguments, and labels the arguments with appropriate roles. \citeauthor{johannsen2015any} (2015) create a frame-semantic parsing corpus that covers five topics, two domains (Wikipedia and Twitter), and nine languages and use it to evaluate cross-lingual word embeddings.
\item {\it Discourse parsing} is the task of segmenting text into elementary discourse units (mostly clauses), which are then recursively connected via discourse relations to form complex discourse units. The segmentation is usually done according to Rhetorical Structure Theory (RST) \cite{mann1988rhetorical}. \citeauthor{Braud2017b} (2017) and \citeauthor{Braud2017} (2017) perform experiments using a diverse range of RST discourse treebanks for English, Portuguese, Spanish, German, Dutch, and Basque.
\item {\it Dialog state tracking (DST)} is the component in task-oriented dialogue statistical systems that keeps track of the belief state, that is, the system's internal distribution over the possible states of the dialogue. A recent state-of-the-art DST model of \citeauthor{Mrksic:2017acl} (2017a) is based exclusively on word embeddings fed into the model as its input. This property of the model enables a straightforward adaptation to cross-lingual settings by simply replacing input monolingual word embeddings with cross-lingual embeddings. Still an under-explored task, we believe that DST serves as a useful proxy task which shows the capability of induced word embeddings to support more complex language understanding tasks. \citeauthor{Mrksic:2017tacl} (2017b) use DST for evaluating cross-lingual embeddings on the Multilingual WOZ 2.0 dataset \cite{wen2016network} available in English, German, and Italian. Their results suggest that cross-lingual word embeddings boost the construction of dialog state trackers in German and Italian even without any German and Italian training data, as the model is able to also exploit English training data through the embedding space. Further, a multilingual DST model which uses training data from all three languages combined with a multilingual embedding space improves tracking performance in all three languages.

\item{\it Entity linking or wikification} is another task tackled using cross-lingual word embeddings \cite{Tsai:2016naacl}. The purpose of the task is to ground mentions written in non-English documents to entries in the English Wikipedia, facilitating the exploration and understanding of foreign texts without full-fledged translation systems \cite{ji:2015tac}. Such wikifiers, i.e., entity linkers are a valuable component of several NLP and IR tasks across different domains \cite{Mihalcea:2007cikm,Cheng:2013emnlp}.

\item {\it Sentiment analysis} is the task of determining the sentiment polarity (e.g. positive and negative) of a text. \citeauthor{Mogadala2016} (2016) evaluate their embeddings on the multilingual Amazon product review dataset of \citeauthor{prettenhofer2010cross} (2010). 

\item {\it Machine translation} is used to translate entire texts in other languages. This is in contrast to bilingual dictionary induction, which focuses on the translation of individual words. \citeauthor{Zou2013} (2013) used phrase-based machine translation to evaluate their embeddings. Cross-lingual embeddings are incorporated in the phrase-based MT system by adding them as a feature to bilingual phrase-pairs. For each phrase, its word embeddings are averaged to obtain a feature vector.

\item {\it Natural language inference} is the task of identifying whether a premise and a hypothesis entail, contradict, or are neutral towards each other. XNLI \cite{Conneau2018b} is a cross-lingual extension of MNLI \cite{Williams2017a}, which translates examples from the original corpus into ten different languages by employing professional translators. 

\end{itemize}

\paragraph{Information retrieval}
Word embeddings in general and cross-lingual word embeddings in specific have naturally found application beyond core NLP applications. They also offer support to Information Retrieval tasks (IR) \cite[inter alia]{Zamani:2016ictir,Mitra:2017arxiv} serving as useful features which can link semantics of the query to semantics of the target document collection, even when query terms are not explicitly mentioned in the relevant documents (e.g., the query can talk about \textit{cars} while a relevant document may contain a near-synonym \textit{automobile}). A shared cross-lingual embedding space provides means to more general cross-lingual and multilingual IR models without any substantial change in the algorithmic design of the retrieval process \cite{Vulic:2015sigir}. Semantic similarity between query and document representations, obtained through the composition process as in the document classification task, is computed in the shared space, irrespective of their actual languages: the similarity score may be used as a measure of document relevance to the information need formulated in the issued query.

\paragraph{Multi-modal and cognitive approaches to evaluation}

Evaluation of monolingual word embeddings is a controversial topic. Monolingual word embeddings are useful downstream \cite{turian2010word}, but in order to argue that one set of embeddings is better than another, we would like a robust evaluation metric. Metrics have been proposed based on co-occurrences (perplexity or word error rate), based on ability to discriminate between contexts (e.g., topic classification), and based on lexical semantics (predicting links in lexical knowledge bases). \citeauthor{sogaard2016evaluating} (2016) argues that such metrics are not valid, because co-occurrences, contexts, and lexical knowledge bases are also used to induce word embeddings, and that downstream evaluation is the best way to evaluate word embeddings. The only task-independent evaluation of embeddings that is reasonable, he claims, is to evaluate word embeddings by how well they predict behavioral observations, e.g. gaze or fMRI data. 

For cross-lingual word embeddings, it is easier to come up with valid metrics, e.g., Precision@k (P@$k$) in word alignment and bilingual dictionary induction. Note that these metrics only evaluate cross-lingual neighbors, not whether monolingual distances between words reflect synonymy relations. In other words, a random pairing of translation equivalents in vector space would score perfect precision in bilingual dictionary induction tasks. In addition, if we intend to evaluate the ability of cross-lingual word embeddings to allow for generalizations {\em within}~languages, we inherit the problem of finding valid metrics from monolingual word representation learning. 

\subsection{Benchmarks}

\paragraph{Benchmarks} In light of the plethora of both intrinsic and extrinsic evaluation tasks and datasets, a rigorous evaluation of cross-lingual embeddings across many benchmark datasets can often be cumbersome and practically infeasible. Existing benchmarks that evaluate on multiple tasks proposed previously \cite{Faruqui2014b,Ammar2016} are mostly no longer available. Available libraries for evaluation on bilingual lexicon induction are the MUSE \cite{Conneau2018}\footnote{\url{https://github.com/facebookresearch/MUSE}} and VecMap \cite{Artetxe2018b}\footnote{\url{https://github.com/artetxem/vecmap}} projects respectively. The recent xling-eval benchmark by \cite{Glavas2019}\footnote{\url{https://github.com/codogogo/xling-eval}} includes bilingual lexicon induction as well as three downstream tasks: cross-lingual document classification, natural language inference, and information retrieval. As a good practice, we generally recommend to evaluate cross-lingual word embeddings on an intrinsic task that is cheap to compute and on at least one downstream NLP task besides document classification. 

\paragraph{Benchmark studies} To conclude this section, we summarize the findings of three recent benchmark studies of cross-lingual embeddings: \citeauthor{Upadhyay2016} (2016) evaluate cross-lingual embedding models that require different forms of supervision on various tasks. They find that on word similarity datasets, models with cheaper supervision (sentence-aligned and document-aligned data) are almost as good as models with more expensive supervision in the form of word alignments. For cross-lingual classification and bilingual dictionary induction, more informative supervision is more beneficial: word-alignment and sentence-level models score better. Finally, for dependency parsing, models with word-level alignment are able to capture syntax more accurately and thus perform better overall. The findings by \citeauthor{Upadhyay2016} strengthen our hypothesis that the choice of the data is more important than the algorithm learning from the same data source. 

\citeauthor{Levy2017} (2017) evaluate cross-lingual word embedding models on bilingual dictionary induction and word alignment. In a similar vein as our typology that is based on the type and level of alignment, they argue that whether or not an algorithm uses a particular feature set is more important than the choice of the algorithm. In their experiments, they achieve the best results using sentence IDs as features to represent words, which outperforms using word-level source and target co-occurrence information. These findings lend further evidence and credibility to our typology that is based on the data requirements of cross-lingual embedding models. Models that learn from word-level and sentence-level information typically outperform other approaches, especially for finer-grained tasks such as bilingual dictionary induction. These studies furthermore raise awareness that we should not only focus on developing better cross-lingual embedding models, but also work on unlocking new data sources and new ways to leverage comparable data, particularly for languages and domains with only limited amounts of parallel training data.

Most recently, \citeauthor{Glavas2019} (2019) evaluate several state-of-the-art methods on a large number of language pairs on bilingual lexicon induction and on downstream tasks. They demonstrate that the performance of cross-lingual word embedding models often depends on the task they are applied to. For orthogonal mapping-based approaches, the performance on bilingual lexicon induction almost perfectly correlates with downstream performance on cross-lingual natural language inference and information retrieval, while for non-orthogonal methods, performance only weakly correlates. This makes downstream evaluation of such methods even more important. Finally, they find that methods that directly optimize for BLI such as \cite{Joulin2018a} exhibit reduced performance on downstream tasks.

\section{General Challenges and Future Directions} \label{sec:challenges}

\paragraph{Subword-level information} In morphologically rich languages, words can have complex internal structures, and some word forms can be rare. For such languages, it makes sense to compose representations from representations of lemmas and morphemes. Neural network models increasingly leverage subword-level information \cite{sennrich2015neural,lample2016neural} and character-based input has been found useful for sharing knowledge in multilingual scenarios \cite{Gillick2016,Ling2016b}. Subword-level information has also been used for learning word representations \cite{Ling2015a,Bhatia2016} but has so far not been incorporated in learning cross-lingual word representations.

\paragraph{Multi-word expressions} Just like words can be too coarse units for representation learning in morphologically rich languages, words also combine in non-compositional ways to form multi-word expressions such as {\em ad hoc} or {\em kick the bucket}, the meaning of which cannot be derived from standard representations of their constituents. Dealing with multi-word expressions remains a challenge for monolingual applications and has only received scarce attention in the cross-lingual setting.



\paragraph{Function words} Models for learning cross-linguistic representations share weaknesses with other vector space models of language: While they are very good at modeling the conceptual aspect of meaning evaluated in word similarity tasks, they fail to properly model the functional aspect of meaning, e.g. to distinguish whether one remarks ``Give me \emph{a} pencil" or ``Give me \emph{that} pencil". Modeling the functional aspect of language is of particular importance in scenarios such as dialogue, where the pragmatics of language must be taken into account.

\paragraph{Polysemy} While conflating multiple senses of a word is already problematic for learning monolingual word representations, this issue is amplified in a cross-lingual embedding space: If polysemy leads to $m$ bad word embeddings in the source language, and $n$ bad word embeddings in the target language, we can derive $\mathcal{O}(n\times m)$ false nearest neighbors from our cross-lingual embeddings. While recent work on learning cross-lingual multi-sense embeddings \cite{li2015multi} is extremely interesting, it is still an open question whether modern NLP models can infer from context, what they need in order to resolve lexical ambiguities.


\paragraph{Embeddings for specialized domains} There are many domains, for which cross-lingual applications would be particularly useful, such as bioinformatics or social media. However, parallel data is scarce in many such domains as well as for low-resource languages. Creating robust cross-lingual word representations with as few parallel examples as possible is thus an important research avenue. An important related direction is to leverage comparable corpora, which are often more plentiful and incorporate other signals, such as from multi-modal contexts.

For many domains or tasks, we also might want to have not only word embeddings, but be able to compose those representations into accurate sentence and document representations. Besides existing methods that sum word embeddings, not much work has been doing on learning better higher-level cross-lingual representations.

\paragraph{Feasibility}


Learning a general shared vector space for words that reliably captures inter-language and intra-language relations may seem slightly optimistic. Languages are very different, and it is not clear if there is even a definition of {\em words} that make words commensurable across languages. Note that while this is related to whether it is possible to translate between the world's languages in the first place, the possibility of translation (at document level) does not necessarily entail that it is possible to device embeddings such that translation equivalents in two languages end up as nearest neighbors.

There is also the question of what is the computational complexity of finding an embedding that obeys all our inter-lingual and intra-lingual constraints, say, for example, translation equivalents and synonymy. Currently, many approaches to cross-lingual word embeddings, as shown in this survey, minimize a loss that penalizes models for violating such constraints, but there is no guarantee that the final model satisfies all constraints. 

Checking whether all such constraints are satisfied in a given model is trivially done in time linear in the number of constraints, but finding out whether such a model exists is much harder. While the problem's decidability follows from the decidability of two-variable first order logic with equivalence/symmetry closure, determining whether such a graph exists is in fact NP-hard \cite{eades1995nearest}.



\paragraph{Non-linear mapping} Mapping-based approaches assume that a linear transformation can project the embedding space of one language into the space of a target language. While \citeauthor{Mikolov2013b} (2013b) and \citeauthor{Conneau2018} (2018a) both find that a linear transformation outperforms non-linear transformation learned via a feedforward neural network, assuming a linear transformation between two languages is overly simplistic and ignores important language-specific differences. \citeauthor{Nakashole2018} \citeyear{Nakashole2018} lend further credibility to this intuition by learning neighbourhood-specific linear transformations and showing that these vary across the monolingual word embedding space. However, to the best of our knowledge, there has not been any model yet that leveraged this intuition to construct a more effective mapping model.

\paragraph{Robust unsupervised approaches} Recently, word-level mapping-based approaches have become the preferred choice for learning cross-lingual embeddings due to their ease of use and reliance on inexpensive forms of supervision. At the same time, methods for learning with less supervision have been developed: These range from approaches using small seed lexicons \cite{Zhang2016l,Artetxe:2017acl} to completely unsupervised approaches that seek to match source and target distributions based on adversarial learning \cite{Zhang2017,Zhang2017j,Conneau2018} and offer support to neural machine translation and cross-lingual information retrieval from monolingual data only \cite{Lample:2018iclr,Artetxe:2018iclr,Litschko:2018sigir,Artetxe2018b}. Such unsupervised methods, however, rely on the assumption that monolingual word embedding spaces are approximately isomorphic, which has been shown not to hold in general and for distant language pairs in particular \cite{Soegaard2018}, for which such methods are desired in the first place. In simple terms, although thought-provoking and attractive in theory, such unsupervised methods thus fail when languages are distant. In such cases, using a distantly supervised seed lexicon of identical strings in both languages is often preferable \cite{Soegaard2018}. Based on a recent large-scale evaluation of cross-lingual word embedding models on bilingual lexicon induction and downstream tasks \cite{Glavas2019}, the model of \citeauthor{Artetxe2018b} \citeyear{Artetxe2018b} is currently the most robust among completely unsupervised approaches.

\section{Conclusion} \label{sec:conclusion}

This survey has focused on providing an overview of cross-lingual word embedding models. It has introduced standardized notation and a typology that demonstrated the similarity of many of these models. It provided proofs that connect different word-level embedding models and has described ways to evaluate cross-lingual word embeddings as well as how to extend them to the multilingual setting. It finally outlined challenges and future directions.

\section*{Acknowledgements}

We thank the anonymous reviewers and the editors for their valuable and comprehensive feedback. Sebastian was supported by Irish Research Council Grant Number EBPPG/2014/30 and Science Foundation Ireland Grant Number SFI/12/RC/2289. Ivan's work is supported by the ERC Consolidator Grant LEXICAL: Lexical Acquisition Across Languages (no 648909). Sebastian is now affiliated with DeepMind.

\bibliography{xlingual_survey}
\bibliographystyle{theapa}

\end{document}